\documentclass[11pt]{article}

\usepackage[final]{acl}

\usepackage[T1]{fontenc}
\usepackage[utf8]{inputenc}
\usepackage{newtxtext}
\usepackage{newtxmath}
\usepackage{inconsolata}

\usepackage{amsthm}
\usepackage{amsmath}
\usepackage{mathtools}
\usepackage{bm}

\AtBeginDocument{%
  \setlength{\abovedisplayskip}{7pt plus2pt minus5pt}%
  \setlength{\belowdisplayskip}{7pt plus2pt minus5pt}%
  \setlength{\abovedisplayshortskip}{0pt plus3pt}%
  \setlength{\belowdisplayshortskip}{4pt plus3pt minus3pt}%
}

\usepackage{booktabs}
\usepackage{multirow}
\usepackage{array}
\usepackage{tabularx}
\usepackage[table]{xcolor}
\usepackage{booktabs}
\usepackage{makecell}
\usepackage{siunitx}
\usepackage{graphicx}
\usepackage{subcaption}

\usepackage{algorithm}
\usepackage{algorithmic}

\usepackage{enumitem}
\usepackage{xspace}

\usepackage{xcolor}

\newcommand{\defeq}{\coloneqq}

\usepackage{hyperref}
\usepackage{cleveref}

\usepackage{amsthm}

\newtheorem{theorem}{Theorem}
\newtheorem{lemma}[theorem]{Lemma}

\newtheorem{corollary}[theorem]{Corollary}

\theoremstyle{definition}

\newtheorem{assumption}[theorem]{Assumption}

\theoremstyle{remark}

\usepackage{enumitem}

\newlist{assumlist}{enumerate}{1}
\setlist[assumlist]{
  label=(\roman*),
  leftmargin=*,
  itemsep=0.25em,
  topsep=0.25em
}

\usepackage[most]{tcolorbox}

\usepackage{titlesec}

\usepackage{float}
\title{HARP: Hadamard-Preconditioned Adaptive Rotation Processor \\for Extreme LLM Quantization}

\author{
  \textbf{Artur Zagitov\textsuperscript{1,2,3}} \quad
  \textbf{Gleb Molodtsov\textsuperscript{1,2}} \quad
  \textbf{Aleksandr Beznosikov\textsuperscript{1,2,3}} \\
 \textsuperscript{1}MIRAI \quad
  \textsuperscript{2}BRAIn Lab \quad
  \textsuperscript{3}Innopolis University
}

\begin{document}
\maketitle
\begin{abstract}
Post-training quantization (PTQ) is essential for deploying LLMs under memory and bandwidth constraints. However, extreme low-bit quantization remains highly sensitive to activation outliers and anisotropic weight curvature. Existing incoherence-based PTQ methods mitigate this issue with fixed randomized Hadamard transforms (RHTs), which improve quantization robustness but cannot adapt the rotated basis to the layer, calibration distribution, or quantizer. We introduce HARP (\textbf{H}adamard-preconditioned \textbf{A}daptive \textbf{R}otation \textbf{P}rocessor), a learnable structured two-sided orthogonal processor that replaces fixed Hadamard mixing while preserving exact full-precision equivalence.
HARP represents each rotation as a product of sparse butterfly-like block-orthogonal stages, supports non-power-of-two dimensions through Mixed-Radix schedules, and initializes to the RHT processor up to a fixed permutation. Fitted only on calibration data, HARP adapts the quantization basis to each layer and backend. Across 2--4-bit settings on Llama models from 1B to 70B, HARP consistently improves perplexity and yields its clearest zero-shot gains at 2 bits; a 2-bit Qwen3-8B experiment shows the same transfer beyond the Llama family. HARP also preserves deployment efficiency: on Llama~2 7B at 2 bits, it reaches 128 tok/s, retaining 90\% of RHT throughput (142 tok/s) and running approximately $2.1\times$ faster than FP16 (61 tok/s).
\end{abstract}

\section{Introduction}
\label{sec:intro}
Large language models (LLMs) have become central to modern NLP, yet their scale makes deployment increasingly constrained by memory bandwidth. During inference, repeated weight and activation transfers through the memory hierarchy dominate latency and serving cost. Quantization targets this bottleneck by compressing weights and activations to fewer bits.

Post-training quantization (PTQ) is especially attractive because it operates on a fixed pretrained model, uses only a small calibration set, and avoids the expense of full training \citep{gptq, smoothquant, awq, omniquant}.
Successful PTQ pipelines do more than choose a rounding rule. They are defined by several coupled design choices, including calibration, representation preprocessing, the reconstruction objective, and the code family used for compression.

In the extreme low-bit regime, outlier handling becomes central. At such bitwidths, small reconstruction errors are amplified across many layers, and heavy-tailed statistics make the quantization problem substantially harder.
As a result, the coordinates in which quantization is performed can fundamentally change the difficulty of the problem. 


Incoherence processing exploits this degree of freedom by applying structured orthogonal changes of basis before quantization. The full-precision layer remains unchanged, but weight mass and curvature-sensitive directions are spread across coordinates, making them less aligned with a few outlier axes \citep{quip}.
In practice, randomized Hadamard transforms (RHTs) have become the default preprocessing choice because they are exactly orthogonal, fast, and admit simple $\mathcal{O}(d \log d)$ kernels \citep{quip_sharp, qtip}.

Although the standard Hadamard/RHT processor provides a strong generic mixing basis, it does not adapt to the layer, the calibration distribution, or the block structure of the downstream quantizer.
Meanwhile, recent evidence shows that the choice of rotation is not incidental: end-to-end rotation methods reveal substantial variation across rotations and motivate learning them from data \citep{quarot, spinquant}, learnable butterfly rotations adapt structured orthogonal mixing to calibration data \citep{butterflyquant}, and closed-form data-aware transforms can improve over fixed Hadamard mixing under specific quantizer assumptions \citep{wush}.
These alternatives, however, often target different operating points, relax the drop-in structure of Hadamard-based PTQ, or introduce rotations that are too costly to use. 
This leaves a gap: the preprocessing module should be learnable from calibration data, remain an exact orthogonal change of basis, and preserve the Hadamard-like efficiency that makes incoherence processing practical.

We present HARP (\textbf{H}adamard-preconditioned \textbf{A}daptive \textbf{R}otation \textbf{P}rocessor), a learnable two-sided orthogonal processor for extreme low-bit PTQ.
HARP satisfies three requirements simultaneously: (i) adaptivity, by fitting rotations to each layer and quantizer backend; (ii) faithfulness, by remaining an exact change of basis that preserves the full-precision model; and (iii) efficiency, by using staged butterfly-like block-orthogonal factors \citep{butterflyfactorization} instead of dense rotations. 
Crucially, HARP is drop-in compatible with existing Hadamard-based PTQ pipelines: at initialization, it recovers the RHT processor up to a fixed permutation convention, after which calibration learns a structured orthogonal refinement around this strong baseline.

\subsection*{Contributions.}
Our main contributions are:
\begin{itemize}[nosep, leftmargin=4mm]
  \item We introduce HARP, a learnable two-sided orthogonal incoherence processor for PTQ that is fit using only calibration data and compatible with both scalar and vector-quantized backends.
  \item We design HARP as a drop-in replacement for fixed RHT/Hadamard preprocessing: at initialization, it recovers the randomized Hadamard processor up to a fixed permutation, and calibration learns a structured orthogonal refinement.
  \item We demonstrate consistent quality gains over fixed RHT across 2--4-bit settings on Llama models from 1B to 70B, transfer the same method to Qwen3-8B without backend retuning, and retain most of the RHT throughput advantage through hardware-aware kernels. Our code is publicly available\footnote{\url{https://github.com/brain-lab-research/HARP}}.
\end{itemize}

\noindent We next formalize the setting (Section~\ref{sec:method}), introduce the HARP processor (Section~\ref{sec:method:harp}), and evaluate it (Section~\ref{sec:experiments}). A detailed literature review is provided in Appendix~\ref{sec:related}.

\section{Setup}
\label{sec:method}
\paragraph{Layerwise PTQ objective.}
Modern PTQ pipelines typically cast quantization as a layerwise reconstruction problem.

We consider PTQ of a linear layer with
$W\in\mathbb{R}^{d_{\mathrm{out}}\times d_{\mathrm{in}}}$ and calibration inputs
$x\in\mathbb{R}^{d_{\mathrm{in}}}$.
As in Hessian-aware PTQ, we use the empirical second moment as a curvature proxy:
\begin{equation}
\label{eq:H_def}
H \;\defeq\; \mathbb{E}[x x^\top]
\in \mathbb{R}^{d_{\mathrm{in}} \times d_{\mathrm{in}}}.
\end{equation}

Given that, PTQ seeks an approximation $\widehat{W}$ constrained to a code family $\mathcal{Q}$ that minimizes a proxy for the output error. 
Scalar quantization rounds weights independently, which is simple and kernel-friendly but can be inefficient at very low bitwidths.
Vector quantization (VQ) instead quantizes small blocks jointly using a structured code family.
By shaping quantization noise in a higher-dimensional space, VQ can significantly reduce distortion at a fixed bitrate, but it introduces more complex encoding/decoding and raises hardware considerations for fast inference.
A common choice is a second-moment or Hessian-weighted quadratic objective, which captures how errors along different input directions affect the layer output. 
\begin{equation}
\label{eq:proxy_loss}
\mathcal{L}(W,\widehat W)
\defeq
\operatorname{Tr}\!\left(
(W-\widehat W)H(W-\widehat W)^\top
\right).
\end{equation}
This perspective yields scalable algorithms because it allows optimization independently per layer (and often per block within a layer), using tractable curvature approximations derived from calibration activations \citep{gptq, obs, obs2, i-obs, obc}.

\paragraph{Incoherence processing for PTQ.}

Extreme low-bit PTQ is often dominated by anisotropy and outliers:
a small subset of coordinates can carry disproportionate mass in $W$ and in the curvature proxy $H$.
This harms blockwise quantizers because their scales and code assignments must cover rare directions,
reducing effective resolution for the majority of weights.

Incoherence processing addresses this by applying an orthogonal change of basis so that energy is less concentrated in a few coordinates.
For a linear layer, we choose two orthogonal processors
$U\in O(d_{\mathrm{out}})$ and $V\in O(d_{\mathrm{in}})$, rotate the weight and curvature proxy as
\begin{equation*}
\begin{aligned}
\widetilde{W} &\defeq U^\top W V,
&
\widetilde{H} &\defeq V^\top H V .
\end{aligned}
\end{equation*}
The backend then quantizes the rotated weight, producing $\widehat{\widetilde{W}}$, and the quantized layer is mapped back to the original basis as
$\widehat{W}\defeq U\widehat{\widetilde{W}}V^\top$.
Because $U$ and $V$ are orthogonal, cyclicity of trace gives the invariant reconstruction objective
\begin{equation}
\label{eq:loss_invariance_new}
\begin{aligned}
\mathcal{L}(W,\widehat{W})
&=
\operatorname{Tr}\!\left(
(\widetilde{W}-\widehat{\widetilde{W}})
\,\widetilde{H}\,
(\widetilde{W}-\widehat{\widetilde{W}})^\top
\right).
\end{aligned}
\end{equation}
Thus, the full-precision layer and the target reconstruction objective are unchanged; only the basis exposed to the finite-code quantizer changes.

This is particularly important because low-bit quantizers are generally not invariant to rotations.
Scalar quantizers are axis-aligned, and blockwise VQ backends operate on fixed contiguous groups.
If large weights or curvature-sensitive directions align with a few coordinates, scales and code assignments must cover rare outliers, reducing resolution for the majority of values.
Randomized Hadamard transforms (RHTs) reduce this concentration generically.
HARP keeps the same orthogonal reparameterization principle, but learns a structured rotation that adapts to the layer and downstream blockwise quantizer.

Following QuIP, one can measure generic concentration using weight incoherence
\begin{equation}
\label{eq:weight_incoherence}
\mu_W(A) \defeq \sqrt{mn}\frac{\|A\|_\infty}{\|A\|_F},
\qquad A\in\mathbb{R}^{m\times n},
\end{equation}
and Hessian incoherence
\begin{equation}
\label{eq:hessian_incoherence}
\mu_H(H) \defeq \sqrt{n}\,\|Q\|_\infty,
\qquad H=Q\Lambda Q^\top .
\end{equation}
Lower values indicate that weights or curvature-sensitive directions are less concentrated in a few coordinate axes.
Classical incoherence scores are useful but not sufficient for modern blockwise quantizers.
A rotation can reduce outliers while still misaligning curvature with the quantizer's blocks, or conversely increase a generic Hessian incoherence score while producing lower backend-specific reconstruction error.
For this reason, we evaluate both classical metrics and quantizer-aligned diagnostics in Appendix~\ref{app:incoherence_diag}: pre-quantization weight incoherence, quantized-weight incoherence, off-block Hessian energy, diagonal Hessian-weighted distortion, and classical Hessian incoherence.
The diagnostics show that HARP does not simply optimize every generic incoherence score. Rather, it learns a basis that is more favorable for the deployed blockwise quantizer.

Fixed RHTs are a popular choice because they are orthogonal and fast.
However, they ignore per-layer statistics and quantizer structure.
HARP keeps Hadamard-like efficiency and exact orthogonality while learning a structured refinement from calibration data.

\section{HARP processors: structured orthogonal rotations}
\label{sec:method:harp}

A dense learned rotation would be too expensive to store or apply at LLM hidden dimensions.
HARP therefore parameterizes each processor as a product of sparse stride stages, following the same high-level structure as fast Hadamard and butterfly transforms.
Fix a transformed dimension $d$ (either $d_{\mathrm{in}}$ or $d_{\mathrm{out}}$) and choose a Mixed-Radix schedule
\begin{equation}
\label{eq:stage_schedule}
\begin{gathered}
\bm{b}=(b_0,\dots,b_{m-1}),
\qquad b_t \ge 2,\\
\prod_{t=0}^{m-1} b_t = d .
\end{gathered}
\end{equation}
The schedule determines which coordinates are mixed at each stage.
In all experiments we use a greedy schedule with preferred radix $8$ (unless otherwise noted). Appendix~\ref{app:greedy_schedule} gives the construction and Appendix~\ref{sec:experiments:radix_ablation} ablates the radix choice.

At stage $t$, define the stride
$s_t\defeq\prod_{j<t} b_j$ and the number of blocks
$D_t\defeq d/b_t$.
Conceptually, a perfect-shuffle permutation $P_t$ groups the $b_t$ coordinates mixed by stage $t$ into contiguous blocks.
The stage operator is
\begin{equation}
\label{eq:harp_stage}
\begin{aligned}
S_t(\Theta_t)
&\defeq
P_t^\top \mathcal{B}_t(\Theta_t)P_t,\\
\mathcal{B}_t(\Theta_t)
&\defeq
\mathrm{BlkDiag}\!\big(
B_{t,1},\ldots,B_{t,D_t}
\big),
\end{aligned}
\end{equation}
where each $B_{t,c}=B_{t,c}(\theta_{t,c})\in\mathbb{R}^{b_t\times b_t}$ is orthogonal.
The one-pass HARP transform is
\begin{equation}
\label{eq:harp_transform_def}
T(\Theta)
\defeq
S_{m-1}(\Theta_{m-1})\cdots S_0(\Theta_0).
\end{equation}
Since each stage is orthogonal, $T(\Theta)$ is orthogonal.
Multiple passes can be used by composing independent copies of the same schedule, although we use one pass by default.

Although Eq.~\eqref{eq:harp_stage} is written with a permutation, no explicit gather is used.
The input is reshaped to expose the stride-$s_t$ groups, transposed so that each length-$b_t$ group is contiguous, multiplied by the corresponding block kernel, and reshaped back.
This preserves the staged execution pattern that makes Hadamard preprocessing efficient.
Appendix~\ref{app:stride_impl_details} gives an index-view example of the stride layout.

HARP uses independent processors for the input and output sides, giving the rotations $V$ and $U$ used in Eq.~\eqref{eq:loss_invariance_new}.
As in RHT preprocessing, we also include diagonal Rademacher sign flips.
In our QuIP\# integration, these signs are already part of the pipeline. HARP reuses them and replaces only the fixed Hadamard mixing component.

\subsection{Hadamard-preconditioned block kernels and initialization}
\label{sec:method:kernels_init}

HARP uses block kernels of the form
\begin{equation}
\label{eq:block_def}
B_{t,c}(\theta_{t,c})
\defeq
Q_{t,c}(\theta_{t,c})\,G_{b_t},
\end{equation}
where $Q_{t,c}(\theta_{t,c})\in SO(b_t)$ is learnable and $G_{b_t}$ is a fixed orthogonal base mixer.
For power-of-two radices, $G_{b_t}$ is the normalized Sylvester Hadamard matrix $H_{b_t}$.
For non-power-of-two radices, we use a deterministic orthogonal fallback computed by QR factorization of a fixed Gaussian matrix.
Thus $G_{b_t}^\top G_{b_t}=I$, so every block remains orthogonal.

Hadamard preconditioning serves two purposes.
First, it anchors HARP to a strong baseline: at $\Theta=0$, all learnable blocks satisfy $Q_{t,c}(0)=I$, so $B_{t,c}(0)=G_{b_t}$.
For power-of-two schedules, the staged product therefore recovers Hadamard-family preprocessing up to a fixed permutation convention, so HARP is a drop-in upgrade for Hadamard-based incoherence processing pipelines.
Second, it makes fitting easier in practice: rather than learning mixing from scratch, HARP learns a structured residual around an already useful incoherence processor, which is helpful when the quantizer objective is highly non-smooth at extreme bitwidths.
Appendix~\ref{app:mixer_choice} ablates the base mixer choice.

\subsection{Exact initialization: \texorpdfstring{$\Theta=0$}{Theta=0} recovers RHT}
\label{sec:method:exact_init}

HARP initializes all learnable parameters to zero, so
$Q_{t,c}(0)=I_{b_t}$ and $B_{t,c}(0)=G_{b_t}$ for every stage $t$ and block $c$.
With Hadamard base mixers, the stride stages therefore recover Hadamard-family preprocessing exactly, up to the fixed permutation induced by the reshape convention.
Under the same Kronecker fallback used by QuIP\# for non-power-of-two dimensions, the initialization also matches QuIP\#'s convention.
Thus, before calibration, HARP is numerically equivalent to the existing RHT processor up to implementation-level floating-point differences.

\vspace{-2pt}
\begin{assumption}[Exact RHT equivalence conditions]
\label{assum:stride_equiv}
For each transformed dimension, the following conditions hold:
\begin{assumlist}[nosep,leftmargin=*]
\item For a transformed dimension $d$, either: (a) $d=2^L$ with a power-of-two stride schedule, or
(b) $d=K\cdot 2^L$ with the Kronecker fallback $T_d(\Theta)$ from Eq.~\eqref{eq:kron_full}.
\item For every power-of-two stage radix $b_t$, the base mixer is the normalized Sylvester Hadamard $G_{b_t}=H_{b_t}$.
\item Initialization is exact: $Q_{t,c}(0)=I_{b_t}$, hence $B_{t,c}(0)=G_{b_t}$.
\item Random sign flips use the same Rademacher signs and placement as the corresponding RHT processor:
$S_U\in\{\pm1\}^{d_{\mathrm{out}}}$ and
$S_V\in\{\pm1\}^{d_{\mathrm{in}}}$ are applied as diagonal matrices.
\end{assumlist}
\end{assumption}

\begin{theorem} \label{theorem}
Under Assumption~\ref{assum:stride_equiv}, HARP at $\Theta=0$ implements QuIP\#'s randomized Hadamard incoherence processor exactly up to the same fixed permutation convention. With the inherited random signs, the full two-sided processor is the corresponding RHT. Proof in Appendix~\ref{app:equiv}.
\end{theorem}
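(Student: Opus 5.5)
The plan is to reduce the statement to a single algebraic identity for the one-sided transform at $\Theta=0$, and then assemble the two-sided processor. By Assumption~\ref{assum:stride_equiv}(iii), every block is $B_{t,c}(0)=G_{b_t}$, so each stage collapses to
\[
\mathcal{B}_t(0)=I_{D_t}\otimes G_{b_t},
\qquad
S_t(0)=P_t^\top\,(I_{D_t}\otimes G_{b_t})\,P_t .
\]
The main observation is that conjugating by the perfect shuffle $P_t$, which exposes the stride-$s_t$ groups, is exactly the commutation-matrix identity for Kronecker products. Writing $d=s_t\,b_t\,r_t$ with $r_t\defeq d/(s_t b_t)$, the reshape convention gives
\[
S_t(0)=\Pi\,(I_{r_t}\otimes G_{b_t}\otimes I_{s_t})\,\Pi^\top
\]
for a fixed index-ordering permutation $\Pi$. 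With a consistent convention, $\Pi$ is either the identity or the same bit/digit reversal at every stage. I would verify this in index form, using the stride layout of Appendix~\ref{app:stride_impl_details}: coordinate $i$ is written in mixed-radix digits $(i_0,\dots,i_{m-1})$, and stage $t$ acts only on digit $i_t$ through $G_{b_t}$.

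Next, take case (a), $d=2^L$ with power-of-two radices, and use Assumption~\ref{assum:stride_equiv}(ii). Each $G_{b_t}=H_{b_t}$ is the normalized Sylvester Hadamard. Multiplying the stages, the factors $I\otimes H_{b_t}\otimes I$ act on disjoint digit groups and therefore commute. Their product is $\bigotimes_t H_{b_t}$, taken in digit order possibly reversed. Because Sylvester matrices satisfy $H_a\otimes H_b=H_{ab}$, we get
\[
T(0)=\Pi\,H_d\,\Pi^\top .
\]
If the ordering reverses, I would use the fact that $H_d$ is invariant under bit-reversal conjugation, since $H_{ij}\propto(-1)^{\langle i,j\rangle}$ and the inner product of bit vectors is symmetric under reversing both arguments. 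This absorbs $\Pi$ into the stated fixed permutation convention. For case (b), $d=K\cdot 2^L$, I would simply unpack Eq.~\eqref{eq:kron_full}: at $\Theta=0$ the power-of-two factor reduces to $H_{2^L}$ by case (a), and the remaining factor is the fixed $K$-dimensional matrix that QuIP\# uses. The result is QuIP\#'s Kronecker Hadamard $H_K\otimes H_{2^L}$, up to the same permutation.

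Finally, I would assemble the two-sided processor. By Assumption~\ref{assum:stride_equiv}(iv), the diagonal sign matrices $S_U$ and $S_V$ are identical in values and placement to those of the RHT. This gives
\[
U=T_{d_{\mathrm{out}}}(0)\,S_U,
\qquad
V=T_{d_{\mathrm{in}}}(0)\,S_V,
\]
or whichever side-ordering QuIP\# uses. Both coincide with the RHT processors, so $\widetilde W=U^\top WV$ and $\widetilde H=V^\top HV$ are the RHT-processed quantities.

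I expect the main obstacle to be the bookkeeping in the first step. It has to be shown that the reshape/transpose stride implementation produces $I\otimes G_{b_t}\otimes I$ with the digit ordering consistent across all stages. Otherwise the product could yield a Hadamard with rows and columns permuted by \emph{different} permutations, and that is not a mere conjugation. I would handle this first, by the explicit digit computation, before any Hadamard algebra.
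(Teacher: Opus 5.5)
Your proposal is correct and follows essentially the same route as the paper's Appendix~\ref{app:equiv}. Both identify each initialized stage with $I\otimes H_{b_t}\otimes I$ acting on a single digit, multiply the commuting stages to get a Kronecker product of Sylvester Hadamards equal to $H_d$ up to the digit-ordering permutation, and use the mixed-product property for the Kronecker fallback.

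Your version is somewhat more complete than the paper's in three places:
\begin{itemize}
\item It covers non-uniform power-of-two schedules such as $(8,8,8,4)$ for $d=2048$, whereas the appendix only treats $d=b^m$.
\item It absorbs the permutation via the invariance of $H_d$ under a common permutation of bit positions. That is the precise fact you need, because digit reversal for radix $2^k$ with $k>1$ is not literally bit reversal.
\item It explicitly assembles the sign matrices, which the appendix leaves implicit in Assumption~\ref{assum:stride_equiv}(iv).
\end{itemize}
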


\subsection{Orthogonal parameterization}
\label{sec:method:param}

For $b_t=2$, we use a Givens rotation:
\begin{equation}
\label{eq:givens}
Q(\theta)=
\begin{bmatrix}
\cos\theta & -\sin\theta\\
\sin\theta & \cos\theta
\end{bmatrix}.
\end{equation}
For $b_t>2$, we use the Cayley map.
If $A(\theta)\in\mathbb{R}^{b_t\times b_t}$ is skew-symmetric, then
\begin{equation}
\label{eq:cayley_new}
Q(\theta)
\defeq
(I + A(\theta))^{-1}(I - A(\theta))
\end{equation}
is orthogonal whenever $I+A(\theta)$ is invertible.
In implementation, $A(\theta)$ is obtained by antisymmetrizing an unconstrained tensor.

\begingroup
\setlength{\textfloatsep}{2pt plus 1pt minus 1pt}
\subsection{Mixed-Radix dimensions and Kronecker fallback}
\label{sec:method:mixed_radix_kron}

The Mixed-Radix schedule in Eq.~\eqref{eq:stage_schedule} supports non-power-of-two dimensions directly.
For example, $5120$ is handled by the schedule $(8,8,8,5,2)$, using Hadamard mixers for the power-of-two stages and the QR fallback for the radix-$5$ stage.
This avoids padding while preserving exact orthogonality and staged execution.

We also support an optional Kronecker fallback that aligns the initialization more closely with QuIP\#'s Hadamard convention and reduces the number of learnable parameters.
If $d=K\cdot n_2$ with $n_2=2^L$, and a Hadamard-like table
$\widetilde{H}_K\in\{\pm1\}^{K\times K}$ satisfies
$\widetilde{H}_K\widetilde{H}_K^\top=KI$, we define
\begin{equation}
\label{eq:kron_full}
T_d(\Theta)
\defeq
\left(\tfrac{1}{\sqrt K}\widetilde{H}_K\right)
\otimes T_{n_2}(\Theta).
\end{equation}

Operationally, this reshapes a vector into a $K\times n_2$ tensor, applies HARP along the power-of-two axis, and mixes the $K$ rows with the fixed Hadamard-like table.
Mixed-Radix HARP is more general and expressive. The Kronecker fallback is useful when closer compatibility or lower overhead is desired.

\subsection{Implementation details}

\subsubsection{Memory and compute}
\label{sec:method:complexity}

A dense learned orthogonal matrix in $\mathbb{R}^{d\times d}$ requires $\Theta(d^2)$ storage and application cost, which is infeasible at LLM hidden dimensions.
HARP instead uses $m$ sparse block stages.
For a stage with radix $b_t$, there are $D_t=d/b_t$ blocks, each with $b_t(b_t-1)/2$ degrees of freedom, so the parameter count is
\begin{equation}
\label{eq:param_count}
D_t\frac{b_t(b_t-1)}{2}
=
d\frac{b_t-1}{2}.
\end{equation}
With bounded radices, this is $\mathcal{O}(dm)$ parameters per pass.
The application cost is $\mathcal{O}(D_tb_t^2)=\mathcal{O}(db_t)$ per stage and
$\mathcal{O}(\sum_t db_t)$ overall, i.e., Hadamard-like $\mathcal{O}(d\log d)$ for constant radices.
This gives learnable exact orthogonal mixing at structured-transform cost.

\begin{algorithm}[t]
\caption{HARP transform $y = x\,T(\Theta)$}
\label{alg:harp_transform}
\small
\begin{algorithmic}[1]
\REQUIRE Input $x\in\mathbb{R}^{B\times d}$, stage radices $(b_0,\dots,b_{m-1})$ with $\prod_t b_t=d$
\REQUIRE Fixed base mixers $G_{b_t}\in\mathbb{R}^{b_t\times b_t}$ and learnable parameters $\Theta=\{\theta_{t,c}\}$
\STATE $y \leftarrow x$
\STATE Precompute strides $s_t \defeq \prod_{r<t} b_r$ for $t=0,\dots,m-1$
\FOR{$t=0$ to $m-1$}
  \STATE $s \leftarrow s_t$, \quad $g \leftarrow d/(b_t s)$
  \STATE Reshape $y$ to $Y\in\mathbb{R}^{B\times g\times b_t\times s}$
  \STATE Transpose to $\widehat{Y}\in\mathbb{R}^{B\times g\times s\times b_t}$
  \STATE Construct block kernels $K_t\in\mathbb{R}^{(d/b_t)\times b_t\times b_t}$:
         $K_t[c] \leftarrow Q(\theta_{t,c})\,G_{b_t}$
  \STATE View $K_t$ as $\mathbb{R}^{g\times s\times b_t\times b_t}$
  \STATE Apply blockwise multiplication along the last dimension
  \STATE Invert transpose and reshape back to $y\in\mathbb{R}^{B\times d}$
\ENDFOR
\RETURN $y$
\end{algorithmic}
\end{algorithm}

\subsubsection{Parameter quantization for deployment}
\label{sec:method:theta_quant}

After fitting, we optionally store HARP parameters in int8 form.
For each block, we quantize either the Givens angle or the upper-triangular entries of the Cayley matrix using a per-block scale, and reconstruct the corresponding orthogonal block at runtime. This reduces processor storage with little effect on perplexity in our experiments. Appendix~\ref{app:theta_quant_details} gives the exact packing rule.

\subsubsection{Fitting HARP processors for PTQ}
\label{sec:method:fitting}

HARP is fit per layer using calibration statistics and a fixed quantizer backend.
Let
\begin{equation*}
\begin{gathered}
\widetilde{W}(\Theta)=U(\Theta_U)^\top W V(\Theta_V),\\
\widetilde{H}(\Theta)=V(\Theta_V)^\top H V(\Theta_V).
\end{gathered}
\end{equation*}

Directly optimizing the full proxy loss in Eq.~\eqref{eq:loss_invariance_new} inside the QuIP\# solver would be expensive: it would require repeated multiplication by $\widetilde{H}(\Theta)$ and repeated blockwise second-order quantization steps.
We therefore optimize a lightweight surrogate that is cheap enough to evaluate during calibration while remaining aligned with the deployed codebook quantizer.

QuIP\# ultimately refines blockwise code assignments using an $LDL^\top$ factorization of the curvature proxy.
During HARP fitting, we do not nest this LDLQ procedure inside the optimizer. Instead, for the current rotated weight we compute a direct blockwise codebook target and treat it as stopped-gradient. This gives a practical training signal: a useful rotation should make $\widetilde W$ easier for the downstream code family to represent, especially in curvature-important directions.

Let
$
\Delta(\Theta)
\defeq
\widetilde{W}(\Theta)-Q(\widetilde{W}(\Theta)),
\quad
w_j(\Theta)\defeq |\widetilde{H}_{jj}(\Theta)|,
$
and let $\bar w=w/\mathrm{mean}(w)$.
Our diagonal Hessian-weighted reconstruction proxy is
\begin{equation}
\label{eq:diag_proxy}
\mathcal{L}_{\mathrm{diag}}(\Theta)
\;\defeq\;
\frac{1}{d_{\mathrm{out}}d_{\mathrm{in}}}
\sum_{i=1}^{d_{\mathrm{out}}}\sum_{j=1}^{d_{\mathrm{in}}}
\Delta_{ij}(\Theta)^2\,\bar{w}_j .
\end{equation}
We stop gradients through $Q(\widetilde W)$, so gradients update the rotation parameters through $\widetilde W(\Theta)$ rather than through discrete code-assignment changes.
This is more stable than straight-through gradients through sharp codebook search.

To better align the rotated curvature proxy with blockwise quantization, we also penalize off-block Hessian energy:
\begin{equation}
\label{eq:hbd}
\mathcal{R}_{\mathrm{bd}}(\Theta_V)
\defeq
\frac{1}{d_{\mathrm{in}}^2}
\sum_{\substack{p,q\\p\neq q}}
\big\|\widetilde{H}_{pq}\big\|_F^2,
\end{equation}
where $\widetilde{H}_{pq}$ denotes a block under the quantizer's contiguous partition.
This term gives $V$ a direct signal to make curvature more local in the same block structure used by the backend.
The total fitting loss is
\begin{equation}
\label{eq:total_fit_loss}
\mathcal{L}_{\mathrm{fit}}(\Theta)
\defeq
\mathcal{L}_{\mathrm{diag}}(\Theta)
+
\lambda_{\mathrm{bd}}\mathcal{R}_{\mathrm{bd}}(\Theta_V).
\end{equation}
We optimize $\Theta_U,\Theta_V$ with Adam from the exact RHT initialization.
Appendix~\ref{app:harp_fit_algorithm} gives the full layerwise fitting procedure, including the target-refresh variant used to reduce calibration time.
\endgroup
\section{Experiments}
\label{sec:experiments}

\paragraph{Scope of comparison.}
Our primary experimental comparison isolates a single component: the incoherence processor inside a fixed QuIP\#-style backend.
We therefore compare HARP directly against RHT under the same backend, and separately report context-matched system-level comparisons to published AWQ, GPTQ, OmniQuant, ButterflyQuant, and the available weight-only SpinQuant result where evaluation protocols are compatible.
We also include a QTIP experiment to test whether the same learned processor can be inserted into a distinct RHT-based PTQ backend.


\paragraph{Models.}
The main scaling study uses Llama~3.2 (1B, 3B) and Llama~2 (7B, 13B, 70B). Subsection~\ref{sec:experiments:qwen3} reports a separate 2-bit transfer experiment on Qwen3-8B, using the same E8P/LDLQ backend and HARP optimization settings.

\vspace{-1pt}
\paragraph{No-finetuning comparison.}
Our main tables intentionally compare quantization \emph{without} QuIP\#'s weight fine-tuning stages.
This isolates the effect of replacing the fixed RHT mixer with HARP: the codebooks, solver, signs, calibration statistics, and evaluation harness are kept fixed.
This choice should not be read as an incompatibility with fine-tuning.
Appendix~\ref{app:ft_ablation} shows that HARP can be combined with QuIP\#'s fine-tuning-during-quantization stage, and that HARP with only this stage already improves over the RHT setting.

\vspace{-1pt}
\paragraph{Evaluation protocol.}
We report perplexity (PPL, $\downarrow$) on WikiText2 and C4. Llama~3.2 results use context length $8192$, while the main Llama~2 comparison uses context length $4096$, matching the corresponding QuIP\# calibration statistics. For comparison with published AWQ/GPTQ/OmniQuant/ButterflyQuant and available SpinQuant results, we additionally evaluate Llama~2 at context length $2048$ in Table~\ref{tab:baseline_2}. Zero-shot accuracy is evaluated with \texttt{lm\_eval} using \texttt{acc} on ARC-Challenge, ARC-Easy, PIQA, and WinoGrande. Evaluation is deterministic for a fixed checkpoint; Appendix~\ref{app:evaluation_uncertainty} reports benchmark standard errors for the principal 2-bit results.

\vspace{-1pt}
\paragraph{Methods compared.}
\emph{FP16} is the unquantized baseline. \emph{RHT} uses fixed randomized Hadamard preprocessing as in QuIP\#. Unless explicitly attached to another backend (e.g., QTIP+HARP), \emph{HARP} denotes the same QuIP\# backend with only its fixed Hadamard mixer replaced by learned HARP processors; random signs, codebooks, solver, and evaluation remain unchanged. The primary variant is Mixed-Radix HARP. The optional Kronecker compatibility variant is reported separately in Appendix~\ref{app:kron_ablation}.

\vspace{-1pt}
\paragraph{Effective bitrate accounting.}
We report effective bits-per-parameter (BPP) assuming the underlying quantized weights contribute exactly the nominal bitrate and adding storage overhead from HARP processor parameters and metadata.
RHT is therefore reported at exactly the nominal bitrate.
We report both floating-point HARP parameters and an int8 parameter-storage variant.

\begin{table*}[t]
\caption{Perplexity (PPL $\downarrow$) for QuIP\# with fixed RHT and Mixed-Radix HARP. Llama~3.2 uses context length $8192$ and Llama~2 uses $4096$. BPP includes HARP processor storage.}
\vspace{-8pt}
\label{tab:ppl_llama}
\centering
\scriptsize\sc
\setlength{\tabcolsep}{0.045cm}
\renewcommand{\arraystretch}{1.04}
\resizebox{\textwidth}{!}{
\begin{tabular}{@{}clccc ccc ccc ccc ccc@{}}
\toprule
& & \multicolumn{3}{c}{Llama~3.2 1B}
& \multicolumn{3}{c}{Llama~3.2 3B}
& \multicolumn{3}{c}{Llama~2 7B}
& \multicolumn{3}{c}{Llama~2 13B}
& \multicolumn{3}{c}{Llama~2 70B} \\
\cmidrule(lr){3-5}\cmidrule(lr){6-8}\cmidrule(lr){9-11}\cmidrule(lr){12-14}\cmidrule(lr){15-17}
Bits & Method
& BPP & W2 & C4
& BPP & W2 & C4
& BPP & W2 & C4
& BPP & W2 & C4
& BPP & W2 & C4 \\
\midrule
16 & FP16
& 16.00 & 11.57 & 13.19
& 16.00 & 9.58 & 10.61
& 16.00 & 5.12 & 6.63
& 16.00 & 4.57 & 6.05
& 16.00 & 3.12 & 4.97 \\
\midrule
2 & QuIP\# (RHT)
& 2.00 & 26.27 & 25.24
& 2.00 & 16.59 & 15.88
& 2.00 & 8.22 & 10.86
& 2.00 & 6.05 & 8.06
& 2.00 & 4.16 & 6.01 \\
\rowcolor[HTML]{F2F2F2}
2 & HARP
& 2.14 & \textbf{22.30} & \textbf{22.57}
& 2.10 & \textbf{15.02} & 14.77
& 2.11 & \textbf{7.23} & \textbf{9.49}
& 2.05 & \textbf{5.71} & \textbf{7.63}
& 2.04 & \textbf{4.01} & \textbf{5.81} \\
\rowcolor[HTML]{F2F2F2}
2 & \hspace{6pt} + int8 params
& 2.08 & 22.32 & \textbf{22.57}
& 2.05 & 15.03 & \textbf{14.76}
& 2.06 & 7.25 & 9.51
& 2.03 & 5.73 & 7.64
& 2.02 & \textbf{4.01} & 5.82 \\
\midrule
3 & QuIP\# (RHT)
& 3.00 & 14.02 & 15.32
& 3.00 & 11.04 & 11.72
& 3.00 & 5.61 & 7.35
& 3.00 & 4.89 & 6.49
& 3.00 & 3.40 & 5.20 \\
\rowcolor[HTML]{F2F2F2}
3 & HARP
& 3.14 & \textbf{13.46} & \textbf{14.56}
& 3.10 & \textbf{10.60} & \textbf{11.50}
& 3.11 & \textbf{5.51} & \textbf{7.19}
& 3.05 & \textbf{4.81} & \textbf{6.38}
& 3.04 & \textbf{3.35} & \textbf{5.15} \\
\rowcolor[HTML]{F2F2F2}
3 & \hspace{6pt} + int8 params
& 3.08 & 13.48 & 14.59
& 3.05 & 10.61 & 11.52
& 3.06 & 5.52 & 7.21
& 3.03 & 4.82 & 6.40
& 3.02 & \textbf{3.35} & 5.16 \\
\midrule
4 & QuIP\# (RHT)
& 4.00 & 12.37 & 13.95
& 4.00 & 10.13 & 11.04
& 4.00 & 5.27 & 6.85
& 4.00 & 4.70 & 6.20
& 4.00 & 3.22 & 5.05 \\
\rowcolor[HTML]{F2F2F2}
4 & HARP
& 4.14 & \textbf{12.17} & \textbf{13.66}
& 4.10 & \textbf{9.88} & \textbf{10.80}
& 4.11 & \textbf{5.22} & \textbf{6.78}
& 4.05 & \textbf{4.64} & \textbf{6.14}
& 4.04 & \textbf{3.19} & \textbf{5.02} \\
\rowcolor[HTML]{F2F2F2}
4 & \hspace{6pt} + int8 params
& 4.08 & 12.18 & 13.69
& 4.05 & 9.90 & 10.83
& 4.06 & 5.24 & 6.81
& 4.03 & 4.66 & 6.15
& 4.02 & 3.20 & \textbf{5.02} \\
\bottomrule
\end{tabular}}
\vspace{-4pt}
\end{table*}

\begin{figure*}[t]
\centering
\begin{subfigure}[t]{0.42\textwidth}
  \centering
  \includegraphics[width=\linewidth]{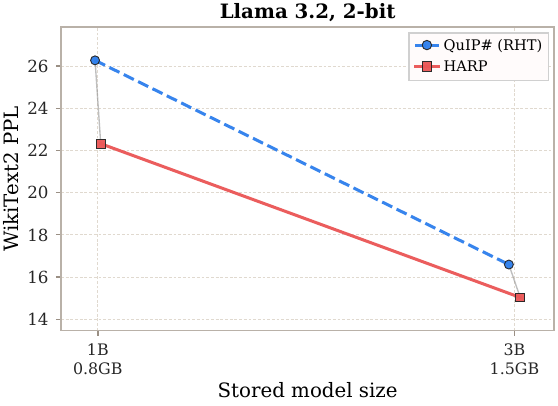}
  \label{fig:scaling_w2_2bit_small}
\end{subfigure}
\hspace{16pt}
\begin{subfigure}[t]{0.42\textwidth}
  \centering
  \includegraphics[width=\linewidth]{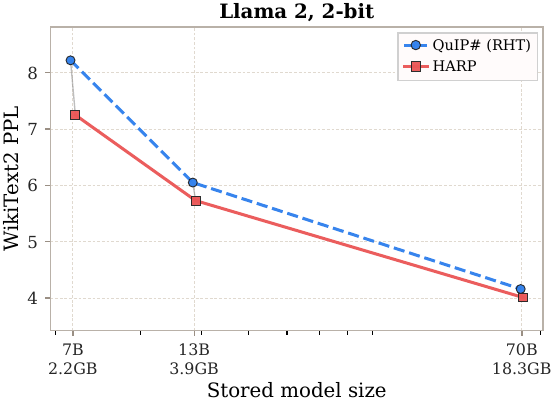}
  \label{fig:scaling_w2_2bit_large}
\end{subfigure}
\vspace{-16pt}
\caption{WikiText2 quality--size scaling at 2 bits. HARP uses int8 parameter storage. The plots are split by model family to keep the quality gaps visible across scales.}
\label{fig:scaling_w2_2bit}
\vspace{-12pt}
\end{figure*}

Figure~\ref{fig:scaling_w2_2bit} visualizes the 2-bit quality--size trade-off.
Across both model families, HARP shifts the RHT curve downwards at nearly the same stored model size when int8 parameter storage is used.
The improvement is greatest where the RHT model is farthest from FP16, while the gap naturally narrows for larger models whose 2-bit RHT baseline is already closer to full precision.
This suggests HARP is most effective when fixed incoherence processing leaves substantial quantization error.

\begin{table*}[t]
\caption{Zero-shot accuracy (\texttt{acc}, $\uparrow$) under QuIP\# (RHT) vs.\ HARP on Llama~2 using \texttt{lm\_eval}.
HARP uses Mixed-Radix. Context length follows the evaluation harness defaults.}
\vspace{-4pt}
\label{tab:zeroshot_llama2}
\centering
\scriptsize\sc
\setlength{\tabcolsep}{0.045cm}
\renewcommand{\arraystretch}{1.06}
\resizebox{\textwidth}{!}{
\begin{tabular}{@{}clccccc ccccc ccccc@{}}
\toprule
& & \multicolumn{5}{c}{Llama~2 7B}
& \multicolumn{5}{c}{Llama~2 13B}
& \multicolumn{5}{c}{Llama~2 70B} \\
\cmidrule(lr){3-7}
\cmidrule(lr){8-12}
\cmidrule(lr){13-17}
Bits & Method
& BPP & ArcC & ArcE & PIQA & Wino
& BPP & ArcC & ArcE & PIQA & Wino
& BPP & ArcC & ArcE & PIQA & Wino \\
\midrule
16 & FP16
& 16.00 & 40.0 & 69.3 & 78.5 & 67.3
& 16.00 & 45.6 & 73.3 & 78.7 & 69.6
& 16.00 & 51.1 & 77.7 & 81.1 & 77.0 \\
\midrule
2 & QuIP\# (RHT)
& 2.00 & 29.7 & 56.7 & 70.8 & 62.1
& 2.00 & 33.8 & 65.1 & 74.4 & 64.3
& 2.00 & 47.4 & 76.9 & 79.5 & 75.0 \\
\rowcolor[HTML]{F2F2F2}
2 & HARP
& 2.11 & \textbf{33.0} & \textbf{63.7} & \textbf{72.4} & \textbf{62.9}
& 2.05 & \textbf{36.4} & \textbf{67.3} & \textbf{75.8} & \textbf{67.2}
& 2.04 & \textbf{48.5} & \textbf{77.8} & \textbf{79.9} & \textbf{75.5} \\
\midrule
3 & QuIP\# (RHT)
& 3.00 & 37.7 & \textbf{69.3} & 76.2 & 66.7
& 3.00 & 41.4 & 71.6 & 77.5 & 68.0
& 3.00 & 50.4 & 78.4 & \textbf{80.7} & \textbf{77.0} \\
\rowcolor[HTML]{F2F2F2}
3 & HARP
& 3.11 & \textbf{38.0} & 69.1 & \textbf{77.1} & \textbf{67.3}
& 3.05 & \textbf{42.3} & \textbf{71.9} & \textbf{78.5} & \textbf{68.4}
& 3.04 & \textbf{51.8} & \textbf{78.6} & 80.6 & 76.8 \\
\midrule
4 & QuIP\# (RHT)
& 4.00 & \textbf{40.7} & \textbf{70.6} & 77.2 & 66.7
& 4.00 & \textbf{45.6} & 74.5 & 78.6 & \textbf{69.4}
& 4.00 & 51.5 & 78.1 & 80.6 & 78.1 \\
\rowcolor[HTML]{F2F2F2}
4 & HARP
& 4.11 & \textbf{40.7} & 69.2 & \textbf{78.3} & \textbf{68.7}
& 4.05 & 45.0 & \textbf{74.8} & \textbf{78.9} & \textbf{69.4}
& 4.04 & \textbf{51.9} & \textbf{78.2} & \textbf{81.2} & \textbf{78.4} \\
\bottomrule
\end{tabular}}
\vspace{-6pt}
\end{table*}

\subsection{Language modeling perplexity}
\label{sec:experiments:ppl}

Table~\ref{tab:ppl_llama} reports the controlled comparison between fixed RHT and Mixed-Radix HARP. Replacing only the mixer lowers perplexity for every evaluated model and bitwidth while preserving the codebooks, random signs, Hessians, and QuIP\# solver. The largest gains occur at 2 bits, where scalar and fixed-RHT quantizers leave the most reconstruction error. Figure~\ref{fig:scaling_w2_2bit} shows that these gains persist at nearly matched stored size after int8 parameter packing.

At 3 and 4 bits, RHT is already close to FP16, so the remaining absolute improvement is necessarily smaller. We therefore view these settings as evidence that HARP remains useful as quantization becomes less constrained, while the main operating point is extreme 2-bit PTQ. Appendix~\ref{app:baseline_2048} provides context-matched comparisons with published scalar and rotation baselines.

\paragraph{Kronecker compatibility and parameter storage.}
The optional Kronecker construction follows QuIP\#'s non-power-of-two convention with lower processor overhead, but is less expressive than the primary Mixed-Radix processor. Its available 1B--7B results are reported separately in Appendix~\ref{app:kron_ablation}. Int8 storage changes perplexity only marginally and reduces the HARP premium to 0.02--0.08 BPP in the 2-bit experiments.

\subsection{Quantizer-aligned structural analysis}
\label{sec:experiments:mechanism}

To examine why the learned basis improves quantization, we dequantize the E8P indices for all 128 linear matrices of 2-bit Llama~2 7B and evaluate $E=\widehat W-W$ against each calibration Hessian. Table~\ref{tab:reconstruction_summary} reports the QuIP\# proxy $\operatorname{tr}(EHE^\top)/\operatorname{tr}(WHW^\top)$, relative Frobenius error, and gain-error component $c=\langle E,W\rangle/\lVert W\rVert_F^2$, where $c<0$ indicates systematic shrinkage.

\begin{table}[t]
\centering
\caption{Aggregate reconstruction diagnostics over 128 quantized matrices of Llama~2 7B at 2 bits.}
\label{tab:reconstruction_summary}
\small
\setlength{\tabcolsep}{0.09cm}
\begin{tabular}{@{}lrrrr@{}}
\toprule
Processor & Proxy & Rel. Frobenius & $\lvert c\rvert$ & Proxy wins \\
\midrule
RHT  & 0.0413 & 0.1288 & 0.0573 & --- \\
HARP & \textbf{0.0375} & \textbf{0.1146} & \textbf{0.0451} & \textbf{123/128} \\
\bottomrule
\end{tabular}
\end{table}

HARP reduces mean proxy error by $9.2\%$, relative Frobenius error by $11.0\%$, and systematic shrinkage by $21.3\%$. The improvement is not explained solely by scalar flattening. E8P quantizes contiguous 8-dimensional blocks with a bounded codeword radius. Blocks outside this representable shell incur unavoidable radial distortion. Figure~\ref{fig:e8p_alignment_main} shows that HARP moves source-block norms toward the shell and lowers distortion across the norm range. For the layer-0 output projection, mean block distortion falls from $6.90$ to $2.50$, while the fraction of blocks above the maximum codeword norm falls from $39.0\%$ to $11.0\%$. Extended diagnostics are reported in Appendix~\ref{app:incoherence_diag}. The regularizer ablation in Appendix~\ref{app:hbd_ablation} shows that mild off-block regularization improves quality, while most of the gain remains without this term.

\begin{figure*}[t]
\centering
\includegraphics[width=0.91\textwidth]{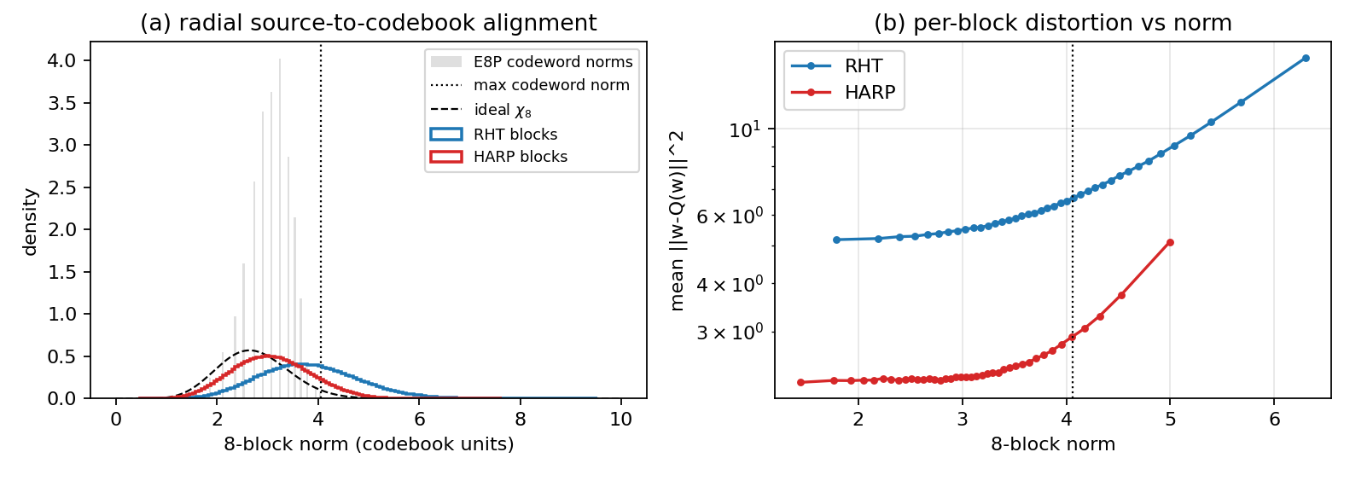}
\caption{E8P codebook alignment for the layer-0 output projection of 2-bit Llama~2 7B under fixed RHT versus HARP. The dotted vertical line marks the
maximum E8P codeword norm.}
\vspace{-2mm}
\label{fig:e8p_alignment_main}
\end{figure*}

\subsection{Zero-shot evaluation}
\label{sec:experiments:zeroshot}

Table~\ref{tab:zeroshot_llama2} reports zero-shot accuracy on Llama~2. At 2 bits, HARP improves every listed task over fixed RHT, with the largest gains on ARC. These results show that the perplexity improvements are accompanied by stronger downstream multiple-choice performance. At 3--4 bits, several task-level differences are small relative to benchmark standard error, consistent with the narrower perplexity gaps in this regime. We therefore emphasize the consistent 2-bit improvements rather than claiming uniform task-level gains at every bitrate. Appendix~\ref{app:evaluation_uncertainty} reports standard errors for the principal 2-bit results.

\subsection{Inference latency and storage}
\label{sec:experiments:latency}

A central motivation for HARP is to improve incoherence processing without giving up the speed advantages of a quantized backend.
We measure single-token latency using the QuIP\# decoding-style microbenchmark: batch size $1$, sequence length $1$, one warmup pass, and $2000$ timed repetitions, with CUDA graphs and SDPA attention enabled.
CUDA graph capture avoids repeated small-kernel launch overhead.
All timings in Table~\ref{tab:latency} use an NVIDIA RTX 5080 with 16GB VRAM.

\begin{table}[t]
\centering
\caption{Single-token latency on RTX 5080.
HARP uses Mixed-Radix and int8 parameter storage.}
\vspace{-4pt}
\label{tab:latency}
\small\sc
\setlength{\tabcolsep}{0.08cm}
\renewcommand{\arraystretch}{1.06}
\begin{tabular}{@{}llccc@{}}
\toprule
Model & Method & Bits & sec/token $\downarrow$ & tok/s $\uparrow$ \\
\midrule
Llama~2 7B & FP16 & 16 & 0.0162 & 61 \\
& QuIP\# (RHT) & 2 & 0.0070 & 142 \\
\rowcolor[HTML]{F2F2F2}
& HARP & 2 & 0.0078 & 128 \\
& QuIP\# (RHT) & 4 & 0.0100 & 100 \\
\rowcolor[HTML]{F2F2F2}
& HARP & 4 & 0.0110 & 91 \\
\midrule
Llama~2 13B & FP16 & 16 & OOM & OOM \\
& QuIP\# (RHT) & 2 & 0.0110 & 91 \\
\rowcolor[HTML]{F2F2F2}
& HARP & 2 & 0.0119 & 84 \\
& QuIP\# (RHT) & 4 & 0.0152 & 66 \\
\rowcolor[HTML]{F2F2F2}
& HARP & 4 & 0.0167 & 60 \\
\bottomrule
\end{tabular}
\end{table}

HARP preserves most of the fixed-RHT throughput while improving quantized-model quality. On Llama~2 7B at 2 bits, HARP reaches 128 tok/s versus 142 tok/s for RHT, retaining 90.1\% of RHT throughput, and is approximately $2.1\times$ faster than FP16 at 61 tok/s. The remaining overhead comes mainly from the additional stride-stage rotations. We use a fused Triton kernel for each stage, avoiding materialized permutations and reducing latency by approximately $20\%$ relative to an unfused implementation. Calibration is a one-time model-preparation cost and is analyzed separately in Appendix~\ref{app:calibration_time}; int8 parameter packing keeps the stored-model overhead small (Appendix~\ref{app:model_size}).

\subsection{Backend portability: QTIP}
\label{sec:experiments:qtip}

We also integrate the same HARP module into QTIP, which uses trellis-coded quantization together with incoherence processing \citep{qtip}.
The integration changes only the preprocessing, the QTIP backend remains fixed.
Table~\ref{tab:qtip_pilot} reports Llama~2 results at 2 and 3 bits.

\begin{table}[ht]
\centering
\caption{Perplexity (PPL $\downarrow$) for QTIP with fixed RHT and HARP.
HARP uses Mixed-Radix.}
\vspace{-4pt}
\label{tab:qtip_pilot}
\small\sc
\setlength{\tabcolsep}{0.10cm}
\renewcommand{\arraystretch}{1.06}
\begin{tabular}{@{}clccc cc@{}}
\toprule
 & & & \multicolumn{2}{c}{Llama~2 7B}
& \multicolumn{2}{c}{Llama~2 13B} \\
\cmidrule(lr){4-5}
\cmidrule(lr){6-7}
Bits & Method
&$\sim$ BPP & W2 & C4
& W2 & C4 \\
\midrule
2 & QTIP (RHT)
& 2.00 & 6.87 & 9.00
& 5.64 & 7.46 \\
\rowcolor[HTML]{F2F2F2}
2 & QTIP + HARP
& 2.10 & \textbf{6.62} & \textbf{8.79}
& \textbf{5.51} & \textbf{7.34} \\
\rowcolor[HTML]{F2F2F2}
2 & \hspace{6pt} + int8
& 2.05 & 6.63 & 8.91
& 5.52 & 7.36 \\
\midrule
3 & QTIP (RHT)
& 3.00 & 5.41 & 7.03
& 4.75 & 6.31 \\
\rowcolor[HTML]{F2F2F2}
3 & QTIP + HARP
& 3.10 & \textbf{5.37} & \textbf{6.98}
& \textbf{4.74} & \textbf{6.26} \\
\rowcolor[HTML]{F2F2F2}
3 & \hspace{6pt} + int8
& 3.05 & \textbf{5.37} & 6.99
& \textbf{4.74} & 6.27 \\
\bottomrule
\end{tabular}
\end{table}

HARP improves QTIP at every evaluated 2- and 3-bit setting, providing evidence that the learned processor is not specific to the QuIP\# code family. The int8 variant preserves nearly all gains. Because QTIP quantizes $Q$, $K$, and $V$ separately, their processors are also fit separately. Sharing the input-side processor across these projections is a possible route to lower calibration cost.

\vspace{-2pt}
\subsection{Comparison with published baselines}

Many published weight-only baselines report Llama~2 perplexity at context length $2048$, whereas the controlled main comparison uses $4096$. Table~\ref{tab:baseline_2} therefore reports a separate context-matched system-level comparison. HARP gives the lowest perplexity in all four 2-bit cells. The ButterflyQuant row uses its published scalar W2A16 pipeline, and the other baselines likewise use different quantizers and metadata. This table compares complete systems, while the fixed-backend RHT/HARP experiment isolates the processor.

\begin{table}[htbp]
\centering
\caption{Context-matched 2-bit Llama~2 perplexity at context length $2048$. Effective BPP is shown where directly comparable.}
\vspace{-4pt}
\label{tab:baseline_2}
\small
\setlength{\tabcolsep}{0.06cm}
\renewcommand{\arraystretch}{1.06}
\resizebox{\columnwidth}{!}{
\begin{tabular}{@{}llccccc@{}}
\toprule
Method & Backend & \makecell{BPP\\7B/13B} & 7B W2 & 7B C4 & 13B W2 & 13B C4 \\
\midrule
ButterflyQuant & scalar W2A16 & --- & 15.40 & 16.61 & 10.24 & 12.48 \\
AWQ & scalar, g128 & 2.14 & $2.2{\times}10^{5}$ & $1.7{\times}10^{5}$ & $1.2{\times}10^{5}$ & $9.4{\times}10^{4}$ \\
GPTQ & scalar, g128 & 2.14 & 36.77 & 33.70 & 28.14 & 20.97 \\
OmniQuant & scalar, g128 & 2.14 & 11.06 & 15.02 & 8.26 & 11.05 \\
QuIP\# (RHT) & E8P/LDLQ & 2.00/2.00 & 8.95 & 11.22 & 6.52 & 8.32 \\
\rowcolor[HTML]{F2F2F2}
HARP & E8P/LDLQ & 2.11/2.05 & \textbf{7.85} & \textbf{9.68} & \textbf{6.13} & \textbf{7.87} \\
\bottomrule
\end{tabular}}
\vspace{-4.5mm}
\end{table}

\subsection{Transfer to Qwen3-8B}
\label{sec:experiments:qwen3}

We additionally evaluate Qwen3-8B at 2 bits and context length
$4096$. The integration changes only model-specific module mapping
and requires fresh second-moment statistics. The E8P/LDLQ backend,
Mixed-Radix processor, fitting objective, and optimization
hyperparameters remain unchanged. Table~\ref{tab:qwen3_main} shows
that HARP improves perplexity on both corpora, extending the
controlled RHT-to-HARP gain beyond the Llama family.

\begin{table}[t]
\centering
\caption{Two-bit Qwen3-8B perplexity at context length $4096$
(mean $\pm$ evaluation standard error).}
\label{tab:qwen3_main}
\small
\setlength{\tabcolsep}{0.10cm}
\begin{tabular}{@{}lcc@{}}
\toprule
Method & WikiText2 $\downarrow$ & C4 $\downarrow$ \\
\midrule
FP16 & $8.99\pm0.27$ & $12.48\pm0.58$ \\
QuIP\# + RHT & $11.38\pm0.35$ & $15.30\pm0.74$ \\
\rowcolor[HTML]{F2F2F2}
QuIP\# + HARP & $\mathbf{10.80\pm0.33}$ &
$\mathbf{14.71\pm0.72}$ \\
\bottomrule
\end{tabular}
\end{table}

\subsection{Calibration scalability}
\label{sec:experiments:calibration}

Table~\ref{tab:calibration_70b_main} summarizes the 70B
preparation cost. Each quantized module group receives 1200
updates. The refresh interval controls how frequently the
stopped-gradient codebook target is recomputed.

\begin{table}[H]
\centering
\caption{Calibration and quantization cost for 2-bit
Llama~2 70B on H100. GPU-hours exclude Hessian generation.}
\label{tab:calibration_70b_main}
\small
\setlength{\tabcolsep}{0.10cm}
\begin{tabular}{@{}lrrr@{}}
\toprule
Method & Refresh $k$ & GPU-hours & W2 PPL $\downarrow$ \\
\midrule
RHT  & --- & $\sim 6$  & 4.16 \\
HARP & 1   & $\sim 80$ & \textbf{4.01} \\
HARP & 2   & $\sim 48$ & 4.03 \\
HARP & 4   & $\sim 31$ & 4.07 \\
\bottomrule
\end{tabular}
\vspace{-2mm}
\end{table}

Target caching reduces HARP preparation cost by up to $61\%$
while retaining an improvement over fixed RHT. Peak HARP
memory is approximately 42--50 GB, so each module group fits
on one 80 GB H100. Independent Transformer blocks can also be
scheduled across GPUs. Appendix~\ref{app:calibration_time}
reports the complete cross-scale results and Hessian-generation
accounting.

\section{Conclusion}
\label{sec:conclusion}

We introduced HARP, a learnable structured orthogonal incoherence processor for low-bit PTQ. HARP initializes exactly to fixed RHT and learns a layer- and backend-aware refinement from calibration data. Mixed-Radix butterfly-like stages support practical transformer dimensions at controlled storage and inference overhead. Across Llama~3.2 and Llama~2 at 2--4 bits, HARP consistently improves over fixed RHT under an otherwise unchanged QuIP\# backend, with the largest gains at 2 bits. The same configuration also improves Qwen3-8B and transfers to QTIP. Structural diagnostics indicate that the learned basis improves alignment with the deployed E8P codebook and blockwise curvature rather than merely optimizing generic incoherence. Int8 parameter storage preserves nearly all gains, and fused kernels retain most of the RHT throughput advantage.

\section*{Limitations}
HARP adds a one-time layerwise fitting stage beyond fixed RHT. Appendix~\ref{app:calibration_time} reports this cost through Llama~2 70B: depending on target-refresh frequency, the 70B fit requires approximately 31--80 GPU-hours and 42--50 GB peak VRAM on H100 hardware. Target caching, chunk-size control, and layer-level parallelism provide direct cost--memory trade-offs. Crucially, no end-to-end model fine-tuning is required.

The evaluated QuIP\# and QTIP backends are weight-only systems, so the controlled experiments isolate weight-only incoherence processing. Extending HARP to weight--activation and KV-cache quantization requires graph-level placement of rotations and coordination with residual streams, rotary embeddings, cache formats, and fused attention kernels.

The current code families expose 2--4-bit operating points. Applying HARP to binary or sub-one-bit quantizers requires a different code family, scaling rule, and quantizer-aligned objective. Finally, the structural analysis identifies codebook alignment and blockwise curvature as important mechanisms, but a complete account of residual-error directions and cross-layer accumulation remains future work.




\bibliography{arxiv}

\appendix
\allowdisplaybreaks


\section{Hadamard stride factorization and initialized equivalence}
\label{app:equiv}

\paragraph{Goal.}
We prove that HARP at $\Theta=0$ recovers Hadamard-family incoherence processing under the conventions used in the main text.
We show: (i) the Walsh--Hadamard transform admits a stride-stage factorization, (ii) initialized HARP implements this factorization up to a fixed permutation, and (iii) the Kronecker fallback matches the corresponding QuIP\# convention.

\paragraph{Conventions.}
We work with column vectors in this appendix for clarity.
The row-vector convention used in the main text is the transpose of these statements.
For $b=2^k$, let the unnormalized Sylvester matrices be defined by
\[
\widetilde H_1=[1],
\qquad
\widetilde H_{2n}=
\begin{bmatrix}
\widetilde H_n & \widetilde H_n\\
\widetilde H_n & -\widetilde H_n
\end{bmatrix},
\]
and define $H_b\defeq b^{-1/2}\widetilde H_b$.
Then $H_b^\top H_b=I_b$.

\subsection{Walsh--Hadamard as stride stages}

Assume $d=b^m$ with $b=2^k$.
For stage $t\in\{0,\dots,m-1\}$, define
\[
s_t\defeq b^t,
\qquad
g_t\defeq \frac{d}{bs_t}=b^{m-t-1},
\]
and the stride stage
\[
S_t\defeq I_{g_t}\otimes H_b\otimes I_{s_t}.
\]

\begin{lemma}[Stride implementation equals the Kronecker stage]
\label{lem:stride_equals_kron}
Let $x\in\mathbb{R}^d$.
Reshape $x$ into $X\in\mathbb{R}^{g_t\times b\times s_t}$ by
\[
X[\alpha,r,\beta]
\defeq
x[\alpha(bs_t)+rs_t+\beta].
\]
Apply $H_b$ along the middle index $r$ independently for each $(\alpha,\beta)$, and flatten back using the same indexing convention.
The resulting vector is $S_tx$.
\end{lemma}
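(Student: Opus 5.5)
The plan is a direct index computation: write out the Kronecker product $I_{g_t}\otimes H_b\otimes I_{s_t}$ entrywise and compare it with the reshape–apply–flatten procedure. The first step is to check that the reshape map is a bijection. Every index $i\in\{0,\dots,d-1\}$ has a unique mixed-radix decomposition $i=\alpha(bs_t)+rs_t+\beta$ with $0\le\alpha<g_t$, $0\le r<b$, $0\le\beta<s_t$, because $g_t\,b\,s_t=d$. Consequently the map $x\mapsto X$ is a relabeling of coordinates, and flattening with the same convention inverts it exactly.

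The second step computes the output of the procedure. After applying $H_b$ along the middle index, the output tensor is
\[
Y[\alpha,r,\beta]=\sum_{r'=0}^{b-1}(H_b)_{rr'}\,X[\alpha,r',\beta],
\]
so the flattened output is
\[
y[\alpha(bs_t)+rs_t+\beta]=\sum_{r'}(H_b)_{rr'}\,x[\alpha(bs_t)+r's_t+\beta].
\]

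The third step uses the standard entrywise formula for Kronecker products. Indices are ordered lexicographically with the leftmost factor most significant, which matches the decomposition $i\leftrightarrow(\alpha,r,\beta)$ above. With that ordering,
\[
(A\otimes B\otimes C)_{(\alpha,r,\beta),(\alpha',r',\beta')}=A_{\alpha\alpha'}B_{rr'}C_{\beta\beta'}.
\]
Setting $A=I_{g_t}$ and $C=I_{s_t}$ forces $\alpha'=\alpha$ and $\beta'=\beta$. It follows that $(S_tx)_i=\sum_{r'}(H_b)_{rr'}\,x[\alpha(bs_t)+r's_t+\beta]$, which is exactly $y_i$. Since this holds for every $i$, we get $y=S_tx$.

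The only real obstacle is convention bookkeeping. One has to confirm that the row-major flattening used in the reshape agrees with the Kronecker index ordering, i.e., that the leftmost factor is the most significant digit. I would verify this once by induction from the two-factor identity $(A\otimes B)_{(i,j),(k,l)}=A_{ik}B_{jl}$ with combined index $i\cdot\dim B+j$, using associativity of $\otimes$. Beyond this check, the argument is purely mechanical and uses nothing from earlier results except the definitions of $s_t$, $g_t$, and $H_b$.
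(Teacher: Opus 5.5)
Your proposal is correct and follows the same route as the paper: both identify the reshape--apply--flatten procedure with the action of $I_{g_t}\otimes H_b\otimes I_{s_t}$ under the stated flattening order. The difference is only in detail. The paper simply asserts the index correspondence, whereas you verify it explicitly: you check that the mixed-radix decomposition is a bijection and that row-major flattening agrees with the most-significant-leftmost Kronecker index ordering.
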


\begin{proof}
For fixed $(\alpha,\beta)$, the operation multiplies the length-$b$ vector over index $r$ by $H_b$ and leaves all other indices unchanged.
This is exactly the action of $I_{g_t}\otimes H_b\otimes I_{s_t}$ under the stated flattening order.
\end{proof}

\noindent Define
\[
\mathcal H_d^{\mathrm{stride}}
\defeq
S_{m-1}\cdots S_0.
\]

\begin{lemma}[Stride product is a Hadamard transform up to permutation]
\label{lem:stride_product_is_wht}
There exists a fixed permutation matrix $P$ depending only on the digit/reshape convention such that
\[
\mathcal H_d^{\mathrm{stride}}
=
P^\top H_d P .
\]
\end{lemma}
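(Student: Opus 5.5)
The plan is to show that, under the flattening convention of Lemma~\ref{lem:stride_equals_kron}, the stride product is a pure Kronecker power of $H_b$. That power equals the Sylvester matrix $H_d$, so $P=I$ works for this convention. For any other digit or reshape convention, the discrepancy is a conjugation by a fixed digit permutation.

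First, I would record the Sylvester identity $\widetilde H_{2n}=\widetilde H_2\otimes\widetilde H_n$. This is immediate from the block definition, since $\begin{bmatrix}A&A\\A&-A\end{bmatrix}=\widetilde H_2\otimes A$. Inducting gives $\widetilde H_{2^j}=\widetilde H_2^{\otimes j}$. Normalizing, $H_{2^j}=H_2^{\otimes j}$, and by associativity of $\otimes$ this yields $H_{b^m}=H_b^{\otimes m}$ when $b=2^k$.

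Second, by Lemma~\ref{lem:stride_equals_kron}, each stage can be written as
\[
S_t=I_{b^{m-t-1}}\otimes H_b\otimes I_{b^t}.
\]
In other words, $S_t$ is an $m$-fold Kronecker product with $H_b$ in slot $m-1-t$ (counting slots from the left, starting at $0$) and $I_b$ in every other slot. I would then apply the mixed-product rule $(A_1\otimes\cdots\otimes A_m)(B_1\otimes\cdots\otimes B_m)=A_1B_1\otimes\cdots\otimes A_mB_m$ repeatedly to $S_{m-1}\cdots S_0$. In each slot exactly one factor is $H_b$ and the rest are $I_b$, so the slotwise products are all $H_b$. This gives
\[
\mathcal H_d^{\mathrm{stride}}=H_b^{\otimes m}=H_d.
\]
As a byproduct, the stages pairwise commute, so the order of application is irrelevant.

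Third, I would handle the general statement with a nontrivial $P$. Any other reshape or transpose convention, such as the row-vector convention or the perfect-shuffle $P_t$ of Eq.~\eqref{eq:harp_stage} that places the stage-$t$ digit last, relabels the base-$b$ digits of the index. Such a relabeling acts as a fixed permutation $P$ of the Kronecker slots, and $P\,(A_1\otimes\cdots\otimes A_m)\,P^\top=A_{\sigma(1)}\otimes\cdots\otimes A_{\sigma(m)}$. Because every slot of $H_b^{\otimes m}$ holds the same matrix $H_b$, this conjugation yields $P^\top H_dP$ with $P$ depending only on the convention. The result is in fact invariant under such digit permutations. The same argument extends to mixed power-of-two radices by replacing $H_b^{\otimes m}$ with $H_{b_{m-1}}\otimes\cdots\otimes H_{b_0}=H_d$.

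The main obstacle is bookkeeping rather than mathematics. The delicate point is matching the index formula $\alpha(bs_t)+r s_t+\beta$ to the correct slot of the Kronecker product, i.e.\ showing that $\beta$ runs over the fastest digits. I would verify this on the row-major identity $(A\otimes B)_{(i,j),(k,l)}=A_{ik}B_{jl}$, with index $(i,j)\mapsto i\cdot\dim B+j$.
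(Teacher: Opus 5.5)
Your proof is correct and follows the same route as the paper's: each stage acts on a single base-$b$ digit (a single Kronecker slot), so the product is $H_b^{\otimes m}$, which equals $H_d$ by the Sylvester recursion; you simply make the paper's digit argument explicit through the mixed-product rule. Your additional observation is also correct and slightly sharper than the paper's statement: permuting identical tensor factors fixes $H_b^{\otimes m}$, so $P=I$ already works for the convention of Lemma~\ref{lem:stride_equals_kron}, and the ``up to permutation'' qualifier is vacuous there.
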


\begin{proof}
Write each index as base-$b$ digits,
$i=\sum_{\ell=0}^{m-1} i_\ell b^\ell$.
Stage $S_t$ mixes digit $i_t$ and holds all other digits fixed.
The product therefore applies $H_b$ once per digit, i.e., it is $H_b^{\otimes m}$ up to a digit-axis permutation.
That permutation is represented by $P$.
Under the Sylvester convention, $H_d=H_b^{\otimes m}$.
\end{proof}

\subsection{Initialized HARP equals the stride Hadamard mixer}

\begin{theorem}[Exact equivalence at $\Theta=0$, restated as Thm. \ref{theorem}]
\label{thm:init_equiv_pow2}
Assume $d=b^m$ with $b=2^k$, and HARP uses $G_b=H_b$ for every stage.
At initialization, $Q_{t,c}(0)=I$ for all stages and blocks.
Then
\[
T(0)=\mathcal H_d^{\mathrm{stride}}
=
P^\top H_d P .
\]
\end{theorem}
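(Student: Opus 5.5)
The proof reduces to the two lemmas just established. First, I will show that at $\Theta=0$ each HARP stage $S_t(0)$ equals the Kronecker stride stage $I_{g_t}\otimes H_b\otimes I_{s_t}$. Then the product collapses by Lemma~\ref{lem:stride_product_is_wht}.

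\emph{Stage-by-stage identification.} With all learnable parameters zero, $Q_{t,c}(0)=I_b$, so by Eq.~\eqref{eq:block_def} every block is $B_{t,c}(0)=G_b=H_b$. Hence $\mathcal B_t(0)=\mathrm{BlkDiag}(H_b,\dots,H_b)=I_{d/b}\otimes H_b$.

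Next I use the execution described by Algorithm~\ref{alg:harp_transform} rather than the abstract $P_t$. The input is reshaped to $(g_t,b,s_t)$ with exactly the index map of Lemma~\ref{lem:stride_equals_kron}, i.e.\ $x[\alpha(bs_t)+r s_t+\beta]$. It is then transposed so that $r$ is innermost, and the kernel $K_t[c]$ is applied to the length-$b$ fiber indexed by $c=(\alpha,\beta)$ before the inverse reshape. The perfect shuffle $P_t$ is precisely the matrix of this reshape-and-transpose, so $S_t(0)=P_t^\top(I_{d/b}\otimes H_b)P_t$.

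Because every block is the same $H_b$, the operation is the application of $H_b$ along index $r$ independently for each $(\alpha,\beta)$. By Lemma~\ref{lem:stride_equals_kron} this is $I_{g_t}\otimes H_b\otimes I_{s_t}=S_t$. The row-vector convention $y=xT$ of the algorithm only transposes each factor, and $H_b$ is symmetric, so the identification holds in both conventions.

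\emph{Composition.} Multiplying the stages in the order of Eq.~\eqref{eq:harp_transform_def} gives $T(0)=S_{m-1}\cdots S_0=\mathcal H_d^{\mathrm{stride}}$. Lemma~\ref{lem:stride_product_is_wht} then gives $T(0)=P^\top H_dP$.

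To make the permutation explicit, I will check that stage $t$ acts on base-$b$ digit $t$. The stages act on distinct digits, so they commute. Their product is $H_b^{\otimes m}$ in the digit ordering $(i_{m-1},\dots,i_0)$, and this coincides with Sylvester $H_d$ via $H_{b^2}=H_b\otimes H_b$. On that reading $P$ is a digit-reversal or identity depending on convention, and in all cases it is fixed and independent of $\Theta$ and the data.

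\emph{Main obstacle.} The only real care needed is bookkeeping: confirming that the implicit $P_t$ in Eq.~\eqref{eq:harp_stage} matches the reshape of Lemma~\ref{lem:stride_equals_kron}, and tracking the row/column and Kronecker ordering so the leftover permutation is truly fixed. I will handle this by writing all indices in mixed-radix digits. The same digit argument then covers mixed power-of-two radices $b_t=2^{k_t}$, using $H_{2^{a+c}}=H_{2^a}\otimes H_{2^c}$.
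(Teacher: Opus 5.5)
Your proposal is correct and follows the paper's proof. At $\Theta=0$ every block is $H_b$, so each stage equals the Kronecker stride stage $I_{g_t}\otimes H_b\otimes I_{s_t}$ of Lemma~\ref{lem:stride_equals_kron}; the product is $\mathcal H_d^{\mathrm{stride}}$, and Lemma~\ref{lem:stride_product_is_wht} then gives $P^\top H_d P$. Your added digit bookkeeping is sound and proves slightly more: since every factor is $H_b$ and $H_d=H_b^{\otimes m}$ under the Sylvester convention, the stride product is exactly $H_b^{\otimes m}$, so $P$ can be taken as the identity in this flattening (and likewise for mixed power-of-two radices).
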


\begin{proof}
At initialization, every HARP block is
$B_{t,c}(0)=Q_{t,c}(0)G_b=H_b$.
Thus each HARP stage is exactly the stride stage $S_t$ from Lemma~\ref{lem:stride_equals_kron}.
Taking the product over stages gives
$T(0)=S_{m-1}\cdots S_0=\mathcal H_d^{\mathrm{stride}}$.
The final equality follows from Lemma~\ref{lem:stride_product_is_wht}.
\end{proof}

\subsection{Kronecker fallback}

Assume $d=K\cdot 2^L$ and $\widetilde H_K\in\{\pm1\}^{K\times K}$ satisfies
$\widetilde H_K\widetilde H_K^\top=KI_K$.
Let $H_K\defeq \widetilde H_K/\sqrt K$.
The Kronecker fallback defines
\[
T_d(\Theta)=H_K\otimes T_{2^L}(\Theta).
\]

\begin{corollary}[Exact Kronecker equivalence at $\Theta=0$]
\label{cor:kron_init_equiv}
If $T_{2^L}(0)=P^\top H_{2^L}P$, then
\[
T_d(0)
=
(I_K\otimes P)^\top
(H_K\otimes H_{2^L})
(I_K\otimes P).
\]
Thus the initialization matches the Kronecker--Hadamard preprocessing convention up to the same fixed permutation on the power-of-two axis.
\end{corollary}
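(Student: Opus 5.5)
The statement to prove is Corollary~\ref{cor:kron_init_equiv}. The proof is a direct computation from the definition of the Kronecker fallback and the mixed-product property of Kronecker products. At $\Theta=0$ the fallback gives $T_d(0)=H_K\otimes T_{2^L}(0)$. I will substitute the hypothesis $T_{2^L}(0)=P^\top H_{2^L}P$; in the intended application this hypothesis is supplied by Theorem~\ref{thm:init_equiv_pow2}. The resulting expression is $H_K\otimes (P^\top H_{2^L} P)$.

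The key step is to factor this as a triple Kronecker product. I write $H_K = I_K H_K I_K$. Applying the mixed-product rule $(A\otimes B)(C\otimes D)=(AC)\otimes(BD)$ twice gives
\[
(I_K\otimes P^\top)(H_K\otimes H_{2^L})(I_K\otimes P)
= (I_K H_K I_K)\otimes(P^\top H_{2^L}P)
= H_K\otimes (P^\top H_{2^L}P).
\]
Using $(A\otimes B)^\top = A^\top\otimes B^\top$, the first factor equals $(I_K\otimes P)^\top$. This yields exactly the claimed identity.

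For the interpretive sentence, I will check that $I_K\otimes P$ is a permutation matrix: a Kronecker product of permutation matrices is again a permutation matrix. It acts only on the power-of-two axis of the $K\times 2^L$ reshape, permuting within each of the $K$ length-$2^L$ blocks. The middle factor $H_K\otimes H_{2^L}$ is exactly the operator QuIP\# applies for $d=K\cdot 2^L$: the normalized Hadamard-like table on the $K$ axis tensored with the Sylvester Hadamard on the power-of-two axis. So the initialization matches that convention up to the stated permutation.

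There is no real obstacle here. The only care needed is bookkeeping of conventions. The reshape order (row-major $K\times 2^L$ with the power-of-two axis fastest) must be consistent with the factor ordering $H_K\otimes(\cdot)$ rather than $(\cdot)\otimes H_K$. Also, under the column-vector convention the transposes must be placed as stated, with the row-vector version obtained by transposing everything. I would state these conventions explicitly rather than leave them implicit.
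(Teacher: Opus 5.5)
Your proposal is correct and matches the paper's proof: substitute $T_{2^L}(0)=P^\top H_{2^L}P$ into $T_d(0)=H_K\otimes T_{2^L}(0)$, then apply the mixed-product property together with $(I_K\otimes P)^\top=I_K\otimes P^\top$. Your additional remarks, that $I_K\otimes P$ is a permutation acting only within the power-of-two axis and that the reshape order must match the factor order $H_K\otimes(\cdot)$, are sound clarifications that the paper leaves implicit.
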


\begin{proof}
Substitute $T_{2^L}(0)=P^\top H_{2^L}P$ and use the mixed-product property of Kronecker products.
\end{proof}

\section{Comparison to published PTQ baselines}
\label{app:baseline_2048}

The main Llama~2 results use context length $4096$, while most published AWQ, GPTQ, OmniQuant, ButterflyQuant, and available SpinQuant weight-only values use $2048$. Table~\ref{tab:baseline_2048_full} therefore reports a separate context-matched comparison. HARP is strongest at 2 and 3 bits. At 4 bits, all competitive systems are close to FP16 and the differences are correspondingly small. ButterflyQuant uses scalar W2A16, while HARP uses E8P/LDLQ. SpinQuant's row is its available weight-only ablation rather than its main weight--activation configuration. These rows are system-level references across different quantizers and calibration procedures. The isolated processor comparison remains fixed RHT versus HARP under the identical QuIP\# backend.

\begin{table*}[!htbp]
\centering
\caption{Context-matched weight-only PTQ comparison on Llama~2 at context length $2048$. Effective BPP is reported for 7B/13B.}
\label{tab:baseline_2048_full}
\small
\setlength{\tabcolsep}{0.075cm}
\renewcommand{\arraystretch}{1.06}
\begin{tabular}{@{}clccccc@{}}
\toprule
Bits & Method & \makecell{Eff. BPP\\7B/13B} & 7B W2 & 7B C4 & 13B W2 & 13B C4 \\
\midrule
2 & ButterflyQuant (scalar W2A16) & --- & 15.40 & 16.61 & 10.24 & 12.48 \\
2 & AWQ (g128) & 2.14/2.14 & $2.2{\times}10^{5}$ & $1.7{\times}10^{5}$ & $1.2{\times}10^{5}$ & $9.4{\times}10^{4}$ \\
2 & GPTQ (g128) & 2.14/2.14 & 36.77 & 33.70 & 28.14 & 20.97 \\
2 & OmniQuant (g128) & 2.14/2.14 & 11.06 & 15.02 & 8.26 & 11.05 \\
2 & QuIP\# (RHT) & 2.00/2.00 & 8.95 & 11.22 & 6.52 & 8.32 \\
\rowcolor[HTML]{F2F2F2}
2 & HARP & 2.11/2.05 & \textbf{7.85} & \textbf{9.68} & \textbf{6.13} & \textbf{7.87} \\
\midrule
3 & AWQ (g128) & 3.15/3.15 & 6.24 & 7.84 & 5.32 & 6.94 \\
3 & GPTQ (g128) & 3.15/3.15 & 6.29 & 7.89 & 5.42 & 7.00 \\
3 & OmniQuant (g128) & 3.15/3.15 & 6.03 & 7.75 & 5.28 & 6.98 \\
3 & QuIP\# (RHT) & 3.00/3.00 & 6.00 & 7.60 & 5.23 & 6.85 \\
\rowcolor[HTML]{F2F2F2}
3 & HARP & 3.11/3.05 & \textbf{5.89} & \textbf{7.46} & \textbf{5.15} & \textbf{6.74} \\
\midrule
4 & AWQ (g128) & 4.16/4.16 & 5.62 & 7.13 & 4.97 & 6.56 \\
4 & GPTQ (g128) & 4.16/4.16 & 5.61 & 7.12 & 4.98 & 6.56 \\
4 & OmniQuant (g128) & 4.16/4.16 & \textbf{5.58} & 7.12 & \textbf{4.95} & 6.56 \\
4 & SpinQuant (weight-only ablation) & --- & 5.60 & --- & 5.00 & --- \\
4 & QuIP\# (RHT) & 4.00/4.00 & 5.64 & 7.16 & 5.01 & 6.60 \\
\rowcolor[HTML]{F2F2F2}
4 & HARP & 4.11/4.05 & 5.59 & \textbf{7.10} & 4.96 & \textbf{6.55} \\
\bottomrule
\end{tabular}
\end{table*}

\section{Evaluation uncertainty}
\label{app:evaluation_uncertainty}

For a fixed quantized checkpoint, perplexity and multiple-choice evaluation are deterministic, no stochastic decoding is used. The uncertainty below is the standard error over evaluation examples, not variance across independently re-quantized checkpoints. Calibration data, Rademacher signs, and random seeds are fixed in the controlled RHT/HARP comparison.

\begin{table}[ht]
\centering
\caption{Two-bit Llama~2 perplexity with evaluation standard errors.}
\label{tab:ppl_standard_errors}
\small
\resizebox{\columnwidth}{!}{
\begin{tabular}{@{}lcccc@{}}
\toprule
Model & RHT W2 & HARP W2 & RHT C4 & HARP C4 \\
\midrule
7B  & $8.22\pm0.22$ & $\mathbf{7.23\pm0.19}$ & $10.86\pm0.37$ & $\mathbf{9.49\pm0.35}$ \\
13B & $6.05\pm0.16$ & $\mathbf{5.71\pm0.14}$ & $8.06\pm0.27$ & $\mathbf{7.63\pm0.26}$ \\
70B & $4.16\pm0.10$ & $\mathbf{4.01\pm0.10}$ & $6.01\pm0.16$ & $\mathbf{5.81\pm0.14}$ \\
\bottomrule
\end{tabular}}
\end{table}

\begin{table*}[ht]
\centering
\caption{Two-bit Llama~2 zero-shot accuracy (percentage points, mean $\pm$ evaluation standard error).}
\label{tab:zeroshot_standard_errors}
\small
\begin{tabular}{@{}llcccc@{}}
\toprule
Model & Method & ARC-C & ARC-E & PIQA & WinoGrande \\
\midrule
7B & RHT  & $29.7\pm1.34$ & $56.7\pm1.02$ & $70.8\pm1.06$ & $62.1\pm1.36$ \\
7B & HARP & $\mathbf{33.0\pm1.37}$ & $\mathbf{63.7\pm0.99}$ & $\mathbf{72.4\pm1.04}$ & $\mathbf{62.9\pm1.37}$ \\
13B & RHT  & $33.8\pm1.38$ & $65.1\pm0.98$ & $74.4\pm1.02$ & $64.3\pm1.35$ \\
13B & HARP & $\mathbf{36.4\pm1.41}$ & $\mathbf{67.3\pm0.96}$ & $\mathbf{75.8\pm1.00}$ & $\mathbf{67.2\pm1.32}$ \\
70B & RHT  & $47.4\pm1.46$ & $76.9\pm0.85$ & $79.5\pm0.94$ & $75.0\pm1.22$ \\
70B & HARP & $\mathbf{48.5\pm1.46}$ & $\mathbf{77.8\pm0.87}$ & $\mathbf{79.9\pm0.94}$ & $\mathbf{75.5\pm1.21}$ \\
\bottomrule
\end{tabular}
\end{table*}

The perplexity improvement is consistent on both corpora and every scale. All listed 2-bit zero-shot means are non-decreasing, although several smaller task differences overlap benchmark standard error. A multi-seed re-quantization study would measure sensitivity to calibration/sign initialization rather than evaluation uncertainty and is separate from the fixed-backend comparison here.
Tables~\ref{tab:ppl_standard_errors} and~\ref{tab:zeroshot_standard_errors} report these standard errors.

\section{Calibration-time cost}
\label{app:calibration_time}

HARP fits the fused $QKV$, output, fused up/gate, and down-projection groups independently within each Transformer block. Each group receives 1200 Adam updates. The refresh interval $k$ changes only how frequently the stopped-gradient codebook target $Q(\widetilde W)$ is recomputed: 1200, 600, and 300 evaluations per group for $k=1,2,4$, respectively. Table~\ref{tab:calibration_scaling} reports calibration and QuIP\# quantization time across model scales.

\begin{table*}[t]
\centering
\caption{Calibration and quantization cost for 2-bit Llama~2 on H100 GPUs. Full-model GPU-hours exclude Hessian generation, model export, and evaluation.}
\label{tab:calibration_scaling}
\small
\setlength{\tabcolsep}{0.08cm}
\begin{tabular}{@{}llrrrrr@{}}
\toprule
Model & Method/$k$ & \makecell{Target evals\\per group} & \makecell{Block\\time} & \makecell{Full-model\\GPU-hours} & \makecell{Peak HARP\\VRAM} & W2 PPL $\downarrow$ \\
\midrule
\multirow{4}{*}{7B}
& RHT & --- & $\sim80$s & $\sim0.7$h & --- & 8.22 \\
& HARP/$1$ & 1200 & $\sim1100$s & $\sim9.7$h & 12 GB & 7.235 \\
& HARP/$2$ & 600 & $\sim650$s & $\sim5.7$h & 12 GB & 7.237 \\
& HARP/$4$ & 300 & $\sim450$s & $\sim4.0$h & 12 GB & 7.25 \\
\midrule
\multirow{4}{*}{13B}
& RHT & --- & $\sim140$s & $\sim1.5$h & --- & 6.05 \\
& HARP/$1$ & 1200 & $\sim2000$s & $\sim22$h & 22 GB & 5.71 \\
& HARP/$2$ & 600 & $\sim1300$s & $\sim14$h & 22 GB & 5.68 \\
& HARP/$4$ & 300 & $\sim850$s & $\sim9$h & 22 GB & 5.75 \\
\midrule
\multirow{4}{*}{70B}
& RHT & --- & $\sim270$s & $\sim6$h & --- & 4.16 \\
& HARP/$1$ & 1200 & $\sim3650$s & $\sim80$h & 42--50 GB & 4.01 \\
& HARP/$2$ & 600 & $\sim2200$s & $\sim48$h & 42--50 GB & 4.03 \\
& HARP/$4$ & 300 & $\sim1400$s & $\sim31$h & 42--50 GB & 4.07 \\
\bottomrule
\end{tabular}
\end{table*}

Refreshing every two steps reduces HARP GPU-hours by approximately $36$--$41\%$ relative to $k=1$, with no systematic quality loss in these runs. Refreshing every four steps reduces cost by approximately $59$--$61\%$ while retaining an improvement over RHT at every scale. At 70B, these settings correspond to approximately 3.3, 2.0, and 1.3 days on one H100 for $k=1,2,4$. Independent blocks can be scheduled across GPUs.

Peak memory is controlled by \texttt{harp\_chunk\_size}: smaller chunks reduce VRAM at a modest runtime cost. Disabling $\mathcal{R}_{\mathrm{bd}}$ entirely saves approximately $20$--$30\%$ VRAM and changes 7B WikiText2 PPL from 7.23 to 7.31. For 70B, it is disabled only for the largest down projection, keeping peak usage below 50 GB. Figure~\ref{fig:calibration_tradeoff} gives the finer 7B refresh sweep, including $k=3,5,6$.

\begin{figure}[ht]
\centering
\includegraphics[width=0.9\linewidth]{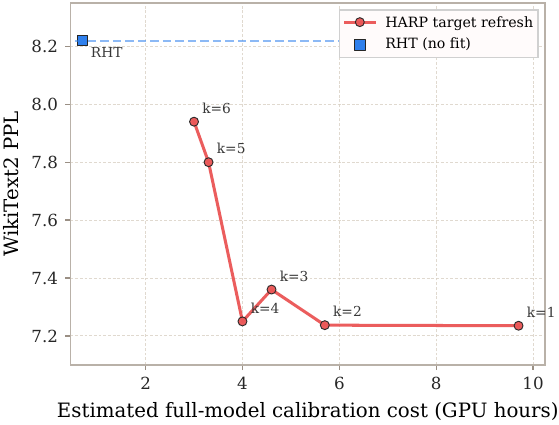}
\caption{Calibration cost--quality trade-off for the detailed Llama~2 7B target-refresh sweep.}
\label{fig:calibration_tradeoff}
\end{figure}

\paragraph{Hessian generation.}
Table~\ref{tab:calibration_scaling} excludes generation of empirical second-moment statistics. For Llama~2 we reuse the released QuIP\# statistics, whose original generation time was not reported. The same statistics are shared by RHT and HARP and reused across bitwidths and refresh settings. As a measured from-scratch reference, generating Qwen3-8B statistics from 6144 sequences at context length $4096$ took approximately 3.1 GPU-hours, or about 5--6 minutes per Transformer layer.

\section{Stored model size}
\label{app:model_size}

Table~\ref{tab:model_size} reports stored model size in GB for the fixed-RHT baseline, HARP with floating-point parameters, and HARP with int8 parameter storage.
All sizes include the quantized model and processor metadata.

\begin{table}[ht]
\centering
\caption{Stored model size in GB.}
\label{tab:model_size}
\small\sc
\setlength{\tabcolsep}{0.12cm}
\renewcommand{\arraystretch}{1.06}
\begin{tabular}{@{}clccc@{}}
\toprule
Bits & Model & RHT & HARP & HARP-int8 \\
\midrule
2 & Llama~3.2 1B & 0.77 & 0.82 & 0.78 \\
\rowcolor[HTML]{F2F2F2}
2 & Llama~3.2 3B & 1.50 & 1.60 & 1.52 \\
2 & Llama~2 7B & 2.15 & 2.41 & 2.20 \\
\rowcolor[HTML]{F2F2F2}
2 & Llama~2 13B & 3.84 & 4.12 & 3.89 \\
2 & Llama~2 70B & 18.18 & 19.21 & 18.38 \\
\midrule
3 & Llama~3.2 1B & 0.90 & 0.95 & 0.91 \\
\rowcolor[HTML]{F2F2F2}
3 & Llama~3.2 3B & 1.85 & 1.96 & 1.87 \\
3 & Llama~2 7B & 2.96 & 3.22 & 3.01 \\
\rowcolor[HTML]{F2F2F2}
3 & Llama~2 13B & 5.43 & 5.70 & 5.48 \\
3 & Llama~2 70B & 26.75 & 27.78 & 26.95 \\
\midrule
4 & Llama~3.2 1B & 1.02 & 1.07 & 1.03 \\
\rowcolor[HTML]{F2F2F2}
4 & Llama~3.2 3B & 2.21 & 2.31 & 2.23 \\
4 & Llama~2 7B & 3.77 & 4.03 & 3.82 \\
\rowcolor[HTML]{F2F2F2}
4 & Llama~2 13B & 7.01 & 7.29 & 7.07 \\
4 & Llama~2 70B & 35.32 & 36.35 & 35.52 \\
\bottomrule
\end{tabular}
\end{table}

\section{Incoherence and quantizer-alignment diagnostics}
\label{app:incoherence_diag}

The purpose of incoherence processing is to change the coordinates seen by the quantizer without changing the full-precision function. We complement the aggregate analysis in Table~\ref{tab:reconstruction_summary} with codebook-shell and curvature diagnostics over the same 128 Llama~2 7B matrices.

\subsection{E8P shell alignment}
E8P quantizes contiguous 8-dimensional source blocks with a bounded codeword radius. Table~\ref{tab:selected_codebook_alignment} reports representative matrices where HARP reduces both direct block distortion and the population outside this radius.

\begin{table}[ht]
\centering
\caption{Selected E8P codebook-alignment diagnostics for 2-bit Llama~2 7B (RHT $\rightarrow$ HARP).}
\label{tab:selected_codebook_alignment}
\small
\resizebox{\columnwidth}{!}{
\begin{tabular}{@{}lcc@{}}
\toprule
Matrix & Mean distortion & Above max norm \\
\midrule
Layer-0 $o$ & $6.90\rightarrow\mathbf{2.50}$ & $39.0\%\rightarrow\mathbf{11.0\%}$ \\
Layer-1 up & $1.14\rightarrow\mathbf{0.86}$ & $3.75\%\rightarrow\mathbf{0.29\%}$ \\
Layer-0 down & $1.15\rightarrow\mathbf{0.90}$ & $3.70\%\rightarrow\mathbf{0.26\%}$ \\
\bottomrule
\end{tabular}}
\end{table}

\begin{figure*}[t]
\centering
\includegraphics[width=0.92\textwidth]{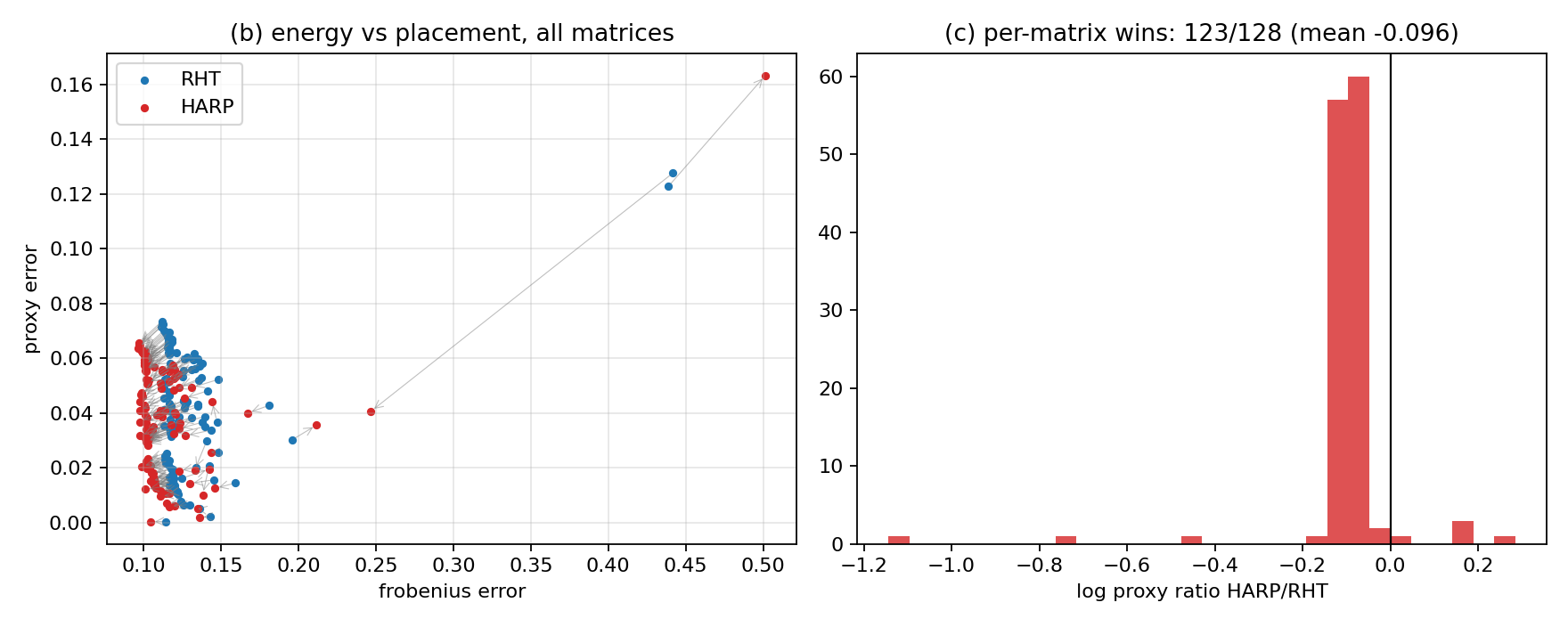}
\caption{Relative Frobenius error and Hessian-weighted proxy error over all 128 matrices, with per-matrix HARP/RHT proxy ratios.}
\label{fig:error_decoupling}
\end{figure*}

Figure~\ref{fig:error_decoupling} shows that HARP generally lowers Hessian-weighted error even when changes in raw error energy are modest: it improves 123 of 128 matrices, with mean log proxy ratio $-0.096$. This pattern is consistent with a learned basis that improves codebook fit and the interaction between residual error and layer curvature.

\begin{figure*}[t]
\centering
\includegraphics[width=0.96\textwidth]{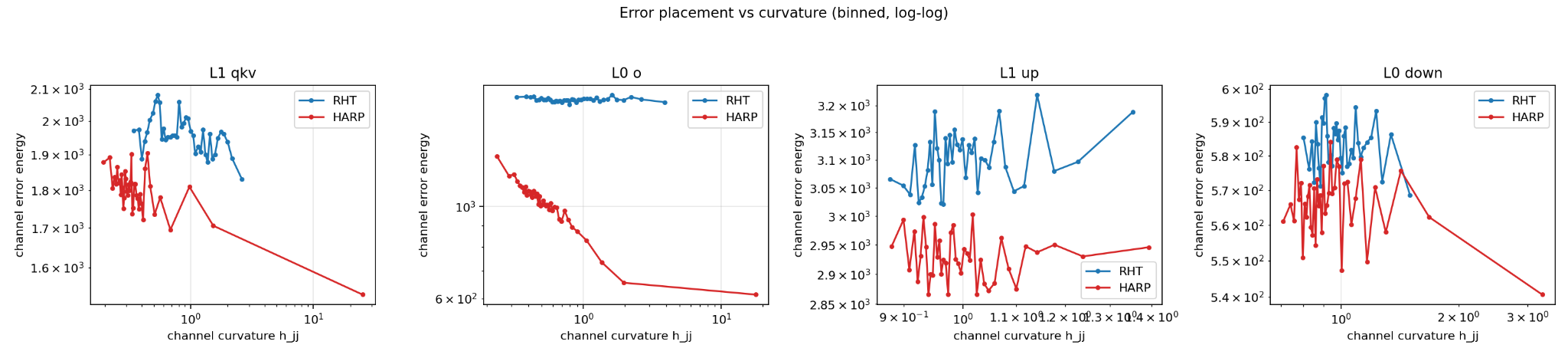}
\caption{Binned channel-error energy versus diagonal curvature for four representative quantized matrices.}
\label{fig:error_vs_curvature}
\end{figure*}

Figure~\ref{fig:error_vs_curvature} provides representative channel-level views. HARP lowers error energy over much of the curvature range, especially for the layer-0 output projection. These plots are diagnostic rather than a complete causal account. Residual directions and cross-layer error accumulation remain important topics for refining the objective.

\subsection{Generic and block-aligned incoherence}
We additionally measure pre- and post-quantization weight incoherence, off-block Hessian energy, diagonal Hessian-weighted distortion, and classical Hessian incoherence. Table~\ref{tab:incoherence_diag} reports these diagnostics. Lower values are better.

\begin{table}[ht]
\centering
\caption{Incoherence and quantizer-alignment diagnostics on Llama~2 7B, 2-bit HARP. ``HARP better'' counts evaluated modules where HARP improves over RHT.}
\vspace{-4pt}
\label{tab:incoherence_diag}
\small\sc
\setlength{\tabcolsep}{0.045cm}
\renewcommand{\arraystretch}{1.06}
\resizebox{\columnwidth}{!}{
\begin{tabular}{@{}lrrrr@{}}
\toprule
Metric & RHT & HARP & \makecell[c]{$\Delta$} & \makecell{HARP\\better} \\
\midrule
$\mu_W(\widetilde W_{\mathrm{pre}})$
& 5.7712 & \textbf{5.0972} & \textbf{$-0.6741$} & 118 / 128 \\
\rowcolor[HTML]{F2F2F2}
$\mu_W(\widehat{\widetilde W})$
& 3.0159 & \textbf{3.0049} & \textbf{$-0.0110$} & 125 / 128 \\
$\mathrm{OffBlk}(\widetilde H)$
& 0.9634 & \textbf{0.9237} & \textbf{$-0.0397$} & 103 / 128 \\
\rowcolor[HTML]{F2F2F2}
$\mathcal L_{\mathrm{diag}}(\widehat{\widetilde W})$
& $2.538{\times}10^{-4}$
& $\bm{2.493{\times}10^{-4}}$
& $-4.57{\times}10^{-6}$
& 126 / 128 \\
$\mu_H(\widetilde H)$
& \textbf{6.5997} & 11.2700 & $+4.6703$ & 13 / 96 \\
\bottomrule
\end{tabular}}
\end{table}

HARP improves pre-quantization weight incoherence, quantized-weight incoherence, off-block Hessian energy, and diagonal Hessian-weighted distortion in almost all evaluated modules. It does not improve the classical Hessian incoherence score $\mu_H$, for which RHT is better. This is consistent with the optimization target: HARP is not trained to minimize every generic incoherence measure, but to find an orthogonal basis favorable to the deployed blockwise quantizer. The combined evidence indicates that HARP better matches the E8P blocks and the layer's second-order geometry, explaining why quantizer-aligned diagnostics improve even when classical Hessian incoherence does not.

\section{Additional experiments}
\label{app:extra_exp}
\subsection{Kronecker compatibility variant}
\label{app:kron_ablation}

The Kronecker variant follows QuIP\#'s non-power-of-two Hadamard convention and reduces processor storage, while Mixed-Radix learns over the full dimension. Table~\ref{tab:kron_results} reports the models on which both variants were evaluated.

\begin{table*}[ht]
\centering
\caption{Two-bit Kronecker compatibility results and Mixed-Radix comparison.}
\label{tab:kron_results}
\small
\resizebox{0.75\textwidth}{!}{
\begin{tabular}{@{}lccc ccc ccc@{}}
\toprule
& \multicolumn{3}{c}{Llama~3.2 1B} & \multicolumn{3}{c}{Llama~3.2 3B} & \multicolumn{3}{c}{Llama~2 7B} \\
\cmidrule(lr){2-4}\cmidrule(lr){5-7}\cmidrule(lr){8-10}
Method & BPP & W2 & C4 & BPP & W2 & C4 & BPP & W2 & C4 \\
\midrule
RHT & 2.00 & 26.27 & 25.24 & 2.00 & 16.59 & 15.88 & 2.00 & 8.22 & 10.86 \\
HARP (Kronecker) & 2.13 & 23.36 & 22.88 & 2.06 & 15.97 & 15.29 & 2.02 & 7.61 & 9.98 \\
\hspace{6pt}+ int8 params & 2.07 & 23.38 & 22.90 & 2.03 & 16.06 & 15.30 & 2.01 & 7.67 & 10.06 \\
\rowcolor[HTML]{F2F2F2}
HARP (Mixed-Radix) & 2.14 & \textbf{22.30} & \textbf{22.57} & 2.10 & \textbf{15.02} & \textbf{14.77} & 2.11 & \textbf{7.23} & \textbf{9.49} \\
\bottomrule
\end{tabular}}
\end{table*}

The Kronecker construction improves over RHT at lower overhead, while Mixed-Radix is consistently more accurate and is therefore the primary variant used in the complete cross-scale study.

\subsection{Ablation: stage radix as a quality/overhead knob}
\label{sec:experiments:radix_ablation}

The stage radix controls both expressivity and overhead.
For fixed dimension $d$, larger radix reduces the number of stride stages (e.g., for $d=4096$, the stage count is
$\log_b d$), which can reduce reshape/transpose and kernel-launch overhead in practice.
However, larger blocks have more degrees of freedom and therefore increase parameter overhead, which is reflected
in BPP. Table~\ref{tab:ablate_radix} shows this trade-off on Llama~2 7B.

\begin{table}[ht]
\centering
\caption{Effect of the preferred radix $b$ on Llama~2 7B (2-bit PTQ)}
\vspace{-4pt}
\label{tab:ablate_radix}
\small\sc
\tabcolsep=0.11cm
\renewcommand{\arraystretch}{1.08}
\begin{tabular}{@{}ccccc@{}}
\toprule
$b$ & Stages (4096) & BPP & W2 PPL $\downarrow$ & C4 PPL $\downarrow$ \\
\midrule
2  & 12 & 2.08 & 7.38 & 9.73 \\
\rowcolor[HTML]{F2F2F2}
4  & 6  & 2.09 & 7.36 & 9.65 \\
8  & 4  & 2.11 & 7.23 & 9.49 \\
\rowcolor[HTML]{F2F2F2}
16 & 3  & 2.15 & \textbf{7.17} & \textbf{9.35} \\
\bottomrule
\end{tabular}
\end{table}

\subsection{Ablation: base mixer choice}
\label{app:mixer_choice}

Table~\ref{tab:mixer_choice} compares using an identity base mixer against the default Hadamard/QR base mixers.
Hadamard preconditioning provides a stronger initialization and slightly improves the final perplexity.

\begin{table}[t]
\centering
\caption{Base mixer choice for HARP blocks on Llama~2 7B, 2-bit PTQ.}
\vspace{-4pt}
\label{tab:mixer_choice}
\small\sc
\setlength{\tabcolsep}{0.12cm}
\renewcommand{\arraystretch}{1.06}
\begin{tabular}{@{}lcc@{}}
\toprule
Mixer & W2 PPL $\downarrow$ & C4 PPL $\downarrow$ \\
\midrule
Identity ($G_b=I$) & 7.29 & 9.61 \\
\rowcolor[HTML]{F2F2F2}
Hadamard+QR & \textbf{7.23} & \textbf{9.49} \\
\bottomrule
\end{tabular}
\end{table}

\subsection{Ablation: off-block regularization}
\label{app:hbd_ablation}

The coefficient $\lambda_{\mathrm{bd}}$ controls how strongly the input-side rotation is encouraged to localize curvature within the quantizer's contiguous blocks. Table~\ref{tab:hbd_ablation} sweeps $\lambda_{\mathrm{bd}}$ on Llama~2 7B at 2 bits.

\begin{table}[ht]
\centering
\caption{Off-block regularizer ablation on Llama~2 7B at 2 bits.}
\label{tab:hbd_ablation}
\small
\begin{tabular}{@{}ccc@{}}
\toprule
$\lambda_{\mathrm{bd}}$ & W2 PPL $\downarrow$ & C4 PPL $\downarrow$ \\
\midrule
0.0 & 7.31 & 9.56 \\
0.1 & \textbf{7.23} & \textbf{9.49} \\
0.2 & 7.33 & 9.64 \\
0.3 & 7.32 & 9.61 \\
0.5 & 7.42 & 9.71 \\
\bottomrule
\end{tabular}
\end{table}

A mild penalty improves both datasets, while larger values increasingly constrain $V$ and reduce quality. The modest difference between $\lambda_{\mathrm{bd}}=0$ and $0.1$ shows that block localization is useful but is not the sole source of HARP's gain.

\section{Additional scaling plots}
\label{app:scaling_plots}

\begin{figure*}[t]
\centering

\begin{subfigure}[t]{0.42\textwidth}
  \centering
  \includegraphics[width=\linewidth]{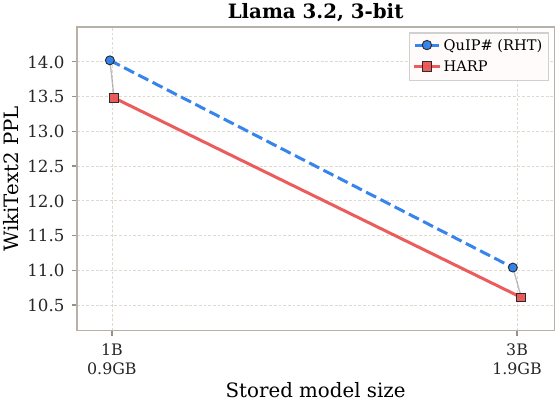}
  \caption{Llama~3.2, 3-bit.}
  \label{fig:scaling_w2_3bit_small}
\end{subfigure}
\hspace{16pt}
\begin{subfigure}[t]{0.42\textwidth}
  \centering
  \includegraphics[width=\linewidth]{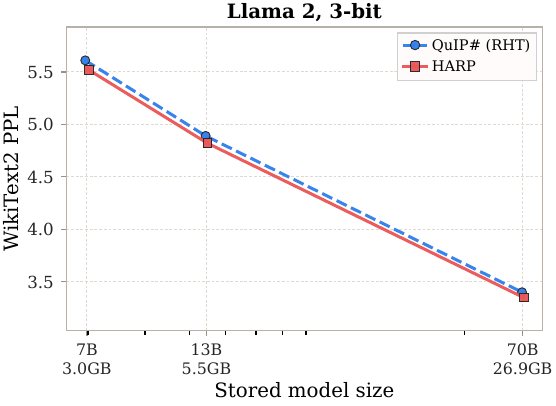}
  \caption{Llama~2, 3-bit.}
  \label{fig:scaling_w2_3bit_large}
\end{subfigure}

\vspace{0.6em}

\begin{subfigure}[t]{0.42\textwidth}
  \centering
  \includegraphics[width=\linewidth]{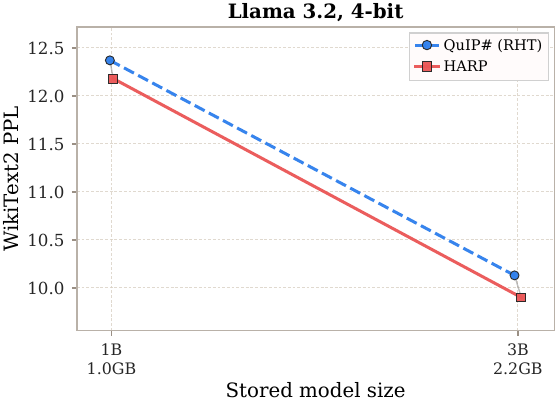}
  \caption{Llama~3.2, 4-bit.}
  \label{fig:scaling_w2_4bit_small}
\end{subfigure}
\hspace{16pt}
\begin{subfigure}[t]{0.42\textwidth}
  \centering
  \includegraphics[width=\linewidth]{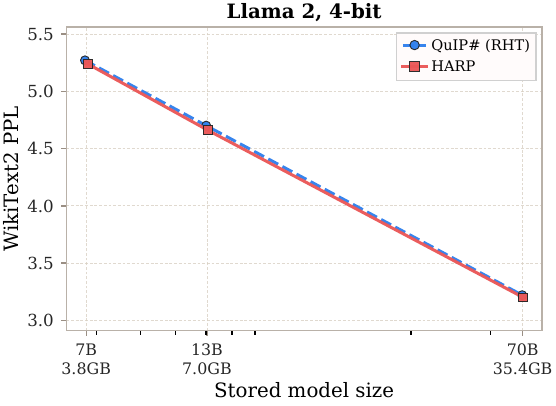}
  \caption{Llama~2, 4-bit.}
  \label{fig:scaling_w2_4bit_large}
\end{subfigure}

\caption{WikiText2 quality--size scaling at 3 and 4 bits. HARP uses int8 parameter storage. Llama~3.2 models use context length $8192$, Llama~2 models use context length $4096$.}
\label{fig:scaling_w2_3bit_4bit}
\end{figure*}

Figure~\ref{fig:scaling_w2_3bit_4bit} shows WikiText2 quality--size scaling at 3 and 4 bits.
The absolute gains are smaller than at 2 bits because the RHT baseline is already closer to FP16, but HARP continues to shift the curve downward at nearly unchanged storage when int8 parameter storage is used.

\section{Compatibility with QuIP\# fine-tuning}
\label{app:ft_ablation}

The main experiments compare no-finetuning quantization to isolate the effect of the incoherence processor.
QuIP\# also reports results with additional fine-tuning stages.
We distinguish two such stages.

\emph{FT-quant} denotes QuIP\#'s fine-tuning-during-quantization stage.
Modules within each Transformer block are quantized in a fixed order, starting with the joint $QKV$ projection.
After the current module or module group is quantized, optimization updates the channel scales for modules that have already been quantized and the full weights for modules that have not yet been quantized.
Thus, FT-quant locally adapts the partially quantized block while quantization proceeds through the block.

\emph{E2E FT} denotes the subsequent end-to-end fine-tuning stage after all modules have been quantized.
At this stage, the procedure fine-tunes quantization scales together with the remaining unquantized parameters in the network, such as normalization parameters and the language-model head.

These stages improve final quality, but they introduce additional choices such as trainable parameter sets, learning rates, schedules, and data budgets.
For this reason, our main tables omit them and isolate the RHT-to-HARP processor change.

HARP is compatible with these stages because it only changes the orthogonal preprocessing used before quantization.
As a preliminary check, we enable FT-quant on Llama~2 7B at context length $4096$.
Table~\ref{tab:ft_ablation} reports the result.

\begin{table}[ht]
\centering
\vspace{-2mm}
\caption{Fine-tuning compatibility on Llama~2 7B, 2-bit, context length $4096$.
HARP uses Mixed-Radix.}
\label{tab:ft_ablation}
\small\sc
\setlength{\tabcolsep}{0.10cm}
\renewcommand{\arraystretch}{1.06}
\begin{tabular}{@{}lcc@{}}
\toprule
Method & W2 PPL $\downarrow$ & C4 PPL $\downarrow$ \\
\midrule
QuIP\# + RHT + FT-quant only & 6.44 & 8.30 \\
QuIP\# + RHT + FT-quant + E2E FT & 6.19 & 8.16 \\
\rowcolor[HTML]{F2F2F2}
HARP + FT-quant only & \textbf{6.16} & \textbf{8.08} \\
\bottomrule
\end{tabular}
\end{table}

Even with only FT-quant, HARP improves over the corresponding RHT setting.
It also slightly improves over the fully tuned RHT result that uses both FT-quant and E2E FT.
We do not include full HARP + E2E FT experiments in the main table because the goal is to isolate the processor and avoid conflating it with fine-tuning schedule choices.

\section{Experimental Reproducibility Checklist}
\label{app:repro}
\subsection{Hardware}
\label{app:repro:hardware}

Quantization and calibration experiments use NVIDIA H100 GPUs unless otherwise stated. Inference latency is measured on an NVIDIA GeForce RTX 5080 with 16GB VRAM. HARP performs 1200 Adam updates for each of four quantized module groups per Transformer block. Table~\ref{tab:calibration_scaling} reports block times, full-model GPU-hours, and peak VRAM through Llama~2 70B. Hessian generation is a separate reusable preprocessing stage and is excluded from those totals.

\subsection{Evaluation}
Reported perplexity and multiple-choice values are deterministic for a fixed quantized checkpoint. Appendix~\ref{app:evaluation_uncertainty} reports standard errors over evaluation examples for the principal 2-bit results. These standard errors quantify benchmark sampling uncertainty, they do not measure variation across independently regenerated calibration sets, signs, or quantized checkpoints.

\subsection{HARP fitting hyperparameters}
\label{app:repro:harp_hparams}

Unless stated otherwise, we fit one pair of HARP processors $(U,V)$ per quantized module group using $S=1200$ Adam steps.
We use stride ordering (Algorithm~\ref{alg:harp_transform}) with a single pass ($P=1$) and the default Mixed-Radix schedule
constructed by Algorithm~\ref{alg:greedy_schedule} with preferred radix $b_{\mathrm{base}}=8$ and maximum radix $b_{\max}=8$.

\paragraph{QuIP\# hyperparameters.}
We keep the QuIP\# backend fixed throughout: codebook family and training, block sizes, quantization scales, and all
QuIP\# solver/rounding hyperparameters follow the settings in the original QuIP\# paper~\citep{quip_sharp}. We do not
retune QuIP\#-specific hyperparameters for HARP, the only change is replacing the fixed Hadamard mixer with the learned
HARP processor (and optionally enabling the Kronecker fallback where stated).

\paragraph{Calibration statistics.}
For the Llama experiments, we reuse the precomputed Hessian/second-moment statistics from the corresponding QuIP\# pipeline. Following the QuIP\# protocol, these statistics use 6144 sequences sampled from RedPajama 1T \citep{quip_sharp}. For the Qwen3-8B transfer experiment, we generate statistics from 6144 sequences at context length $4096$. This preprocessing takes approximately 3.1 GPU-hours in our setup. Hessian generation is a one-time stage whose outputs can be reused across quantization methods, bitwidths, hyperparameter sweeps, and target-refresh settings.

\paragraph{Optimization.}
We use learning rates $\eta_U=\eta_V=3\times 10^{-2}$ for the $U$ and $V$ parameters.
We initialize rotation parameters at $\Theta=0$, so that $Q_{t,c}(0)=I$
and HARP matches Hadamard-family preprocessing at initialization (Section~\ref{sec:method:exact_init}).

\paragraph{Objective and regularization.}
We use the diagonal-weighted proxy objective from Eq.~\eqref{eq:diag_proxy} and the block-diagonalization regularizer
from Eq.~\eqref{eq:hbd} with $\lambda_{\mathrm{bd}}=0.1$ and block size $g=8$. For Llama~2 70B down projection, we disable $\mathcal{R}_{\mathrm{bd}}$ to keep peak calibration memory below 50 GB.

\paragraph{Base mixers.}
For power-of-two radices we use Hadamard base mixers, and for non-power-of-two radices we use the QR-based orthogonal
fallback (Section~\ref{sec:method:kernels_init}).

\subsection{Code and artifact availability}
\label{sec:code}

The HARP implementation, QuIP\#/QTIP integrations, deployment kernels, and reproduction scripts are available at \url{https://github.com/brain-lab-research/HARP}; quantized HARP checkpoints are available at  \url{https://huggingface.co/collections/brain-lab/harp-quantized-models}.

\subsection{Artifacts and licenses}
\label{app:artifacts_licenses}

We use publicly released Llama~2, Llama~3.2, and Qwen3 checkpoints under their original model licenses, and evaluate on WikiText2, C4, ARC-Challenge, ARC-Easy, PIQA, and WinoGrande through the standard evaluation harness. We release the software required to reproduce HARP experiments together with quantized checkpoints for research and reproducibility. Derived checkpoints remain subject to the licenses and access terms of their base models. We do not redistribute benchmark or calibration text. The released code is based on the public QuIP\# and QTIP implementations and follows the upstream licenses stated in the repository README.

\section{Implementation details}
\label{app:impl}

This section reports implementation details omitted from the main text: the stride-layout example, the Mixed-Radix schedule construction, the layerwise fitting pseudocode, and parameter packing.

\subsection{Stride-stage implementation}
\label{app:stride_impl_details}

The permutation $P_t$ in Eq.~\eqref{eq:harp_stage} is conceptual.
A stage is implemented by reshaping, transposing, multiplying small blocks, and reshaping back.

\paragraph{Index-view example.}
For $d=16$ and $b=4$, write an index as two base-4 digits
$i=i_0+4i_1$.
The first stage mixes the low-order digit $i_0$ while holding $i_1$ fixed; these are contiguous groups.
The second stage mixes the high-order digit $i_1$ while holding $i_0$ fixed; these are stride-4 groups.
Thus each stage mixes one digit of the Mixed-Radix index representation, as in fast Walsh--Hadamard and Cooley--Tukey-style transforms, but implemented by reshape/transpose rather than by materializing a permutation.

\subsection{Stage schedules and Mixed-Radix construction}
\label{app:greedy_schedule}

Algorithm~\ref{alg:greedy_schedule} gives the schedule construction used
in all experiments. It peels off as many copies of the preferred radix
$b_{\mathrm{base}}$ as divide $d$, factors the remainder with the largest
admissible factor at most $b_{\max}$, and keeps any leftover prime as a
final stage. We use $b_{\mathrm{base}}=b_{\max}=8$; the schedule is
deterministic in $d$ and needs no padding. For example, $4096$ gives
$(8,8,8,8)$ and $5120$ gives $(8,8,8,5,2)$, with the QR fallback of
Section~\ref{sec:method:kernels_init} used only for the radix-$5$ stage.

\begin{algorithm}[ht]
\caption{Greedy Mixed-Radix schedule (preferred radix first)}
\label{alg:greedy_schedule}
\small
\begin{algorithmic}[1]
\REQUIRE Dimension $d\ge 2$, preferred radix $b_{\mathrm{base}}$ (default $8$), maximum radix $b_{\max}$ (default $8$)
\ENSURE A list of stage radices $\bm{b}=(b_0,\dots,b_{m-1})$ such that $\prod_t b_t=d$
\STATE $\bm{b}\leftarrow [\;]$, \quad $r \leftarrow d$
\WHILE{$r \bmod b_{\mathrm{base}} = 0$}
  \STATE append $b_{\mathrm{base}}$ to $\bm{b}$
  \STATE $r \leftarrow r / b_{\mathrm{base}}$
\ENDWHILE
\FOR{$f=\min(b_{\max},r)$ down to $2$}
  \WHILE{$r \bmod f = 0$}
    \STATE append $f$ to $\bm{b}$
    \STATE $r \leftarrow r / f$
  \ENDWHILE
\ENDFOR
\IF{$r \neq 1$}
  \STATE append $r$ to $\bm{b}$
\ENDIF
\STATE \textbf{return} $\bm{b}$
\end{algorithmic}
\end{algorithm}

\subsection{Layerwise fitting procedure}
\label{app:harp_fit_algorithm}
The full layerwise fitting procedure is given in Algorithm~\ref{alg:harp_fit}. It iterates over Adam steps, refreshing the quantized codebook target every $k$ steps to reduce calibration cost.

\begin{algorithm}[ht]
\caption{Layerwise fitting of HARP for a linear layer}
\label{alg:harp_fit}
\small
\begin{algorithmic}[1]
\REQUIRE Weight $W\in\mathbb{R}^{d_{\mathrm{out}}\times d_{\mathrm{in}}}$, second moment $H\in\mathbb{R}^{d_{\mathrm{in}}\times d_{\mathrm{in}}}$
\REQUIRE Blockwise codebook quantizer $Q(\cdot)$; HARP initialization $\Theta_U=\Theta_V=0$
\REQUIRE Adam step size $\eta$, steps $S$, block size $g$, regularizer $\lambda_{\mathrm{bd}}$, target refresh interval $k$ (default $k=1$)
\STATE Initialize $U(\Theta_U)\in O(d_{\mathrm{out}})$ and $V(\Theta_V)\in O(d_{\mathrm{in}})$ as HARP processors
\FOR{$s=1$ to $S$}
    \STATE Compute rotated weights $\widetilde{W} \leftarrow U^\top W V$
    \STATE Compute rotated second moment $\widetilde{H} \leftarrow V^\top H V$
    \STATE $\bar{w}_j \leftarrow |\widetilde{H}_{jj}| / \mathrm{mean}(|\widetilde{H}_{jj}|)$ and stopgrad$(\bar{w})$
    \IF{$s \bmod k = 1$ or $k=1$}
      \STATE $\widehat{\widetilde{W}} \leftarrow \mathrm{stopgrad}(Q(\widetilde{W}))$
    \ENDIF
    \STATE $\Delta \leftarrow \widetilde{W} - \widehat{\widetilde{W}}$
    \STATE $\mathcal{L}_{\mathrm{diag}} \leftarrow \frac{1}{d_{\mathrm{out}}d_{\mathrm{in}}}\sum_{i,j} \Delta_{ij}^2 \,\bar{w}_j$
    \STATE Compute $\mathcal{R}_{\mathrm{bd}}$ from Eq.~\eqref{eq:hbd}
    \STATE $\mathcal{L}_{\mathrm{fit}} \leftarrow \mathcal{L}_{\mathrm{diag}} + \lambda_{\mathrm{bd}}\mathcal{R}_{\mathrm{bd}}$
    \STATE Update $\Theta_U,\Theta_V$ with Adam using $\nabla \mathcal{L}_{\mathrm{fit}}$
\ENDFOR
\STATE Optionally quantize HARP parameters to int8 as in Section~\ref{sec:method:theta_quant}
\RETURN Fitted processors $U,V$
\end{algorithmic}
\end{algorithm}

\subsection{Int8 storage of HARP parameters}
\label{app:theta_quant_details}

After fitting, we optionally quantize HARP parameters to 8-bit integers for storage efficiency.
For $b_t>2$ stages we store only the strict upper triangle of the skew-symmetric matrix $A(\theta)$, which has $b_t(b_t-1)/2$ entries per block.
For $b_t=2$ stages we store the single Givens angle.
For each block $c$, we compute a per-block scale
\[
s_{t,c}=\max |a_{t,c}|/127
\]
and store
\[
q_{t,c}=\mathrm{round}(a_{t,c}/s_{t,c}).
\]
At runtime we reconstruct $a_{t,c}\approx s_{t,c}q_{t,c}$ and build the corresponding orthogonal block using the Givens or Cayley parameterization.

\section{Detailed Literature Review}
\label{sec:related}
We organize related work into three parts: post-training quantization and its failure modes, transform-based reparameterizations, and structured orthogonal transforms for quantization.

\subsection{Post-Training Quantization of LLMs}
\label{sec:related:ptq}

\paragraph{General compression and PTQ.}
LLM deployment under tight memory and bandwidth budgets has motivated a broad set of compression techniques, including one-shot pruning \citep{sparsegpt, wanda}, quantization-aware training (QAT) \citep{llmqat}, and post-training quantization \citep{survey}.
We focus on PTQ because it operates on a fixed pretrained model with only a small calibration set, which keeps it practical at frontier scale where QAT or full retraining is costly.

\paragraph{Layerwise second-order reconstruction.}
The layerwise objective in Eq.~\eqref{eq:proxy_loss} descends from classical second-order pruning \citep{obd, obs, obs2}.
OBC turns this analysis into an exact greedy solver for joint pruning and quantization \citep{obc}, GPTQ scales it to LLM-sized layers \citep{gptq}, and later work accelerates the underlying sparse recovery \citep{i-obs}.
ZeroQuant pairs fine-grained group-wise quantization with lightweight layerwise distillation \citep{zeroquant}.
More recent rounding procedures correct errors inherited from previously quantized layers \citep{qronos} or target the divergence of the full model output directly \citep{yaqa}.
All of these methods improve the rounding rule under a fixed basis.
HARP is complementary: it changes the basis exposed to any such solver while leaving the rounding rule untouched.

\paragraph{Heavy-tailed statistics and outlier channels.}
Extreme low-bit quantization is dominated by heavy-tailed weight distributions and a small number of high-magnitude outlier channels that inflate per-tensor scales and reduce effective precision~\citep{spinquant,quarot}.
A direct remedy is to separate outliers and preserve them in higher precision: LLM.int8()~\citep{llmint8} routes outlier features to FP16, while SpQR and SqueezeLLM decompose weights into a dense quantized core plus a sparse high-precision correction~\citep{spqr,squeezellm}.
QUIK extends the strategy to both weights and activations for end-to-end low-bit inference~\citep{quik}.
Although effective, these methods introduce irregular memory access patterns and specialized kernel requirements that can negate the throughput advantages.
This motivates approaches that maintain dense, uniform low-bit computation while reshaping layer statistics.

\paragraph{Calibration-based rescaling.}
A practical alternative to explicit outlier separation is to apply lightweight, calibration-guided channel-wise rescalings.
AWQ derives per-channel scales from activation magnitudes to protect sensitive channels under weight-only quantization~\citep{awq}.
SmoothQuant shifts part of the activation dynamic range into the weights via a matched diagonal rescaling, enabling standard integer kernels~\citep{smoothquant}.
OmniQuant learns per-layer scale and clipping adjustments during calibration, though clipping is not a strict change of basis and can alter the function~\citep{omniquant}.
RPTQ instead reorders and clusters activation channels so that similarly ranged channels share quantization parameters~\citep{rptq}.
While practical, such rescaling and reordering cannot mix coordinates and therefore leave correlated outlier subspaces intact, a limitation that grows increasingly severe at extreme bitwidths.
This motivates orthogonal reparameterizations that mix all coordinates while preserving the full-precision model exactly.

\subsection{Transform-Based Reparameterizations}
\label{sec:related:transforms}

\paragraph{Incoherence processing.}
QuIP~\citep{quip} introduces incoherence processing: it applies a structured random orthogonal change of basis so that weight mass and curvature-sensitive directions are spread across coordinates.
QuIP\#~\citep{quip_sharp} replaces the general random orthogonals with the randomized Hadamard transform (RHT), which is exactly orthogonal and admits $\mathcal{O}(d \log d)$ application.
QTIP~\citep{qtip} retains the same preprocessing while advancing the quantizer backend to trellis-coded quantization.
Beyond weight-only PTQ, QuaRot~\citep{quarot} inserts discrete orthogonal rotations into the full Transformer graph to suppress outliers in weights, activations, and the KV cache.
A shared limitation of all these methods is the fixed transform.

\paragraph{Learned model-level rotations.}
SpinQuant~\citep{spinquant} demonstrates that the choice of rotation substantially affects low-bit quality and learns Transformer-invariant rotations (including residual-stream and attention-head rotations) end-to-end against a quantized-network objective.
The learned rotations are model-level objects wired into the Transformer graph, making SpinQuant an important baseline but not a drop-in replacement for the two-sided Hadamard preprocessor inside QuIP\#-style pipelines.
HARP, in contrast, operates at the level of a single linear layer and is fit within the PTQ calibration loop.
It is designed as a reusable processor module for backends that already expose an incoherence-processing step.

\paragraph{Non-orthogonal data-aware transforms.}
WUSH~\citep{wush} derives closed-form near-optimal blockwise transforms for joint weight--activation quantization under RTN AbsMax-scaled quantizers, combining a Hadamard backbone with a calibration-dependent second-moment factor.
The resulting transform is generally non-orthogonal and targets standard RTN-style block quantizers at 4-bit weight/activation precision, a different operating point from HARP's focus on extreme low-bit PTQ (2--4 bits per weight) with vector-quantized backends.
More fundamentally, non-orthogonal transforms alter the full-precision computation and are not drop-in change-of-basis modules. HARP deliberately stays strictly orthogonal so the full-precision model is preserved exactly.

\paragraph{Rate allocation, information-theoretic views.}
WaterSIC~\citep{watersic} analyzes linear-layer quantization through rate--distortion theory and derives a waterfilling allocation across input columns.
Q-Palette~\citep{qpalette} connects near-optimal Gaussian-weight bit allocation with practical fractional-bit quantizers.
Both are complementary to HARP: they modify bit allocation or the quantizer family, whereas HARP modifies the orthogonal basis for a fixed backend.

\paragraph{Structured code families.}
The effectiveness of extreme low-bit PTQ depends heavily on the quantizer code family.
AQLM~\citep{aqlm} achieves extreme compression through additive multi-codebook quantization, and PV-Tuning further improves such extreme-compression pipelines by fine-tuning quantized representations beyond straight-through estimation~\citep{pvtuning}.
GPTVQ and VPTQ demonstrate that Hessian-aware vector quantization can improve the size--accuracy trade-off by quantizing higher-dimensional weight blocks~\citep{gptvq,vptq}.
QuIP\# combines incoherence processing with lattice/codebook vector quantization, while QTIP replaces explicit codebooks with trellis coding to increase the effective quantization dimension~\citep{quip_sharp,qtip}.
As these backends grow more powerful, the choice of coordinate system becomes more consequential: HARP learns this coordinate system, leaving the chosen backend unchanged.

\subsection{Structured Orthogonal Transforms}
\label{sec:related:butterfly}

Butterfly factorizations express a broad family of fast transforms as products of sparse orthogonal stages and make these stages learnable~\citep{butterflyfactorization}.
Structured orthogonal parameterizations of this kind also power parameter-efficient fine-tuning, where orthogonal updates~\citep{oft} are made scalable through butterfly factorization~\citep{boft}.
The closest direction to HARP is the use of learnable structured orthogonal rotations as quantization preprocessors.
ButterflyQuant~\citep{butterflyquant} replaces fixed Hadamard rotations with learnable butterfly transforms parameterized by continuous Givens rotations, preserving orthogonality and $\mathcal{O}(d \log d)$ cost while adapting the rotation to calibration data.
HARP shares the principle that structured orthogonal transforms provide learnable mixing at deployable cost, but differs in target and compatibility constraints.

Specifically, HARP is designed as a drop-in replacement for the two-sided RHT inside QuIP\#-style PTQ pipelines, with four distinguishing properties.
First, HARP initializes exactly to the corresponding RHT preprocessor (up to a fixed permutation convention), so calibration learns a structured refinement around an already-strong baseline rather than from scratch.
Second, HARP uses Hadamard-preconditioned block kernels and supports non-power-of-two dimensions through Mixed-Radix schedules without padding.
Third, HARP optimizes a layerwise objective directly coupled to the deployed blockwise quantizer, including a block-diagonalization regularizer that aligns the learned curvature with the backend's block structure.
Fourth, HARP is compatible with modern vector-quantized backends such as QuIP\# and QTIP, enabling backend-agnostic deployment.

Table~\ref{tab:baseline_2} includes ButterflyQuant's published context-matched W2A16 perplexities as a system-level reference. HARP obtains lower perplexity in that comparison, but the scalar and E8P/LDLQ backends differ. The controlled evidence for the learned processor remains the RHT-to-HARP replacement within the same QuIP\# pipeline.

\paragraph{Summary of positioning.}
Relative to fixed RHT~\citep{quip_sharp,qtip}, HARP preserves fast structured execution while adding per-layer, backend-aware adaptivity.
Relative to model-level learned rotations~\citep{spinquant}, HARP uses a staged structured parameterization with controlled overhead and does not require rewiring the Transformer graph.
Relative to non-orthogonal data-aware transforms~\citep{wush}, HARP stays strictly orthogonal so the full-precision model is preserved exactly.
Rather than proposing a new standalone pipeline or quantizer, HARP treats the incoherence processor as a reusable module insertable into any Hadamard-based backbone.

\section*{Information About Use of AI Assistants}
AI assistants were used to proofread the manuscript, support literature search, and organize the final repository structure. All AI-assisted outputs were reviewed by the authors.

\end{document}